\DeclareMathOperator*{\argmin}{arg\,min}
\newcommand{\bp}{ \begin{proof}}
\newcommand{\ep}{\end{proof} }
\newcommand{\Ex}{\mathbb{E}\hspace{0.05cm}}
\newcommand{\bm}[1]{\mbox{\boldmath $#1$}}
\newcommand{\be}{\begin{equation}}
\newcommand{\ee}{\end{equation}}
\newcommand{\bal}{\begin{align}}
	\newcommand{\eal}{\end{align}}
\newcommand{\bq}{\begin{eqnarray}}
\newcommand{\eq}{\end{eqnarray}}
\newcommand{\bqn}{\begin{eqnarray*}}
	\newcommand{\eqn}{\end{eqnarray*}}
\newcommand{\nn}{\nonumber}
\newcommand{\ba}{\left[ \begin{array}}
	\newcommand{\ea}{\\ \end{array} \right]}
\newcommand{\qd}{\hfill{$\blacksquare$}}
\newcommand{\define}{\;\stackrel{\Delta}{=}\;}
\newcommand{\Tr}{\mbox{\rm {\small Tr}}}
\newtheorem{assumption}{{Assumption}}
\newtheorem{theorem}{{Theorem}}
\newtheorem{corollary}{{Corollary}}
\def\tran{^{\mathsf{T}}}
\def\one{\mathds{1}}
\def\bgamma {{\boldsymbol \gamma}}
\def\h{{\boldsymbol{h}}}
\def\n{{\boldsymbol{n}}}
\def\r{{\boldsymbol{r}}}
\def\s{{\boldsymbol{s}}}
\def\w{{\boldsymbol{w}}}
\def\x{{\boldsymbol{x}}}
\def\real{{\mathbb{R}}}
\def\hlinewd#1{%
	\noalign{\ifnum0=`}\fi\hrule \@height #1 \futurelet
	\reserved@a\@xhline}
\begin{document}
	\setlength{\abovedisplayskip}{1.6mm}
	\setlength{\belowdisplayskip}{1.6mm}
	
	\title{Performance Limits of Stochastic Sub-Gradient  Learning,\\ Part I: Single Agent Case}
		
	\author{Bicheng~Ying,~\IEEEmembership{Student Member,~IEEE},~and Ali~H.~Sayed,~\IEEEmembership{Fellow,~IEEE}
	\thanks{This work was supported in part by NSF grants CIF-1524250, ECCS-1407712, and DARPA N66001-14-2-4029. A short conference version appears in \cite{ying2016performance}.

	The authors are with Department of Electrical Engineering, University of California, Los Angeles, CA 90095. Emails: \{ybc,sayed\}@ucla.edu.} \vspace{-4mm}}
	
	\maketitle

\begin{abstract}%   <- trailing '%' for backward compatibility of .sty file
		In this work and the supporting Part II\cite{ying16ssgd2}, we examine the performance of stochastic sub-gradient learning strategies under weaker conditions than usually considered in the literature. The new conditions are shown to be automatically satisfied by several important cases of interest including SVM, LASSO, and Total-Variation denoising formulations.  In comparison, these problems do not satisfy the traditional assumptions used in prior analyses and, therefore, conclusions derived from these earlier treatments are not directly applicable to these problems. 
		{\color{black} 
		The results in this article establish that stochastic sub-gradient strategies can attain linear convergence rates, as opposed to sub-linear rates, to the steady-state regime.}  A realizable exponential-weighting procedure is employed to smooth the intermediate iterates and guarantee useful performance bounds in terms of convergence rate and excessive risk performance. Part I of this work focuses on single-agent scenarios, which are common in stand-alone learning applications, while Part II\cite{ying16ssgd2} extends the analysis to networked learners. The theoretical conclusions are illustrated by several examples and simulations, including comparisons with the FISTA procedure.
	\end{abstract}
	
	\begin{keywords}
		Sub-gradient, affine-Lipschitz, linear rate, diffusion strategy, SVM, LASSO, Total Variation, FISTA.
	\end{keywords}

	\section{Introduction}%
	The minimization of {\em non-differentiable} convex cost functions is a critical step in the solution of many design problems \cite{bertsekas1999nonlinear,polyak1987introduction,nesterov2004introductory}, including the design of sparse-aware (LASSO) solutions \cite{hastie2009elements,tibshirani1996regression}, support-vector machine (SVM) learners \cite{cortes1995support, shalev2011pegasos,vapnik1998statistical,bishop2006pattern,theodoridis2008Pattern}, or  total-variation-based image denoising solutions \cite{rudin1992nonlinear,beck2009fast}. Several powerful techniques have been proposed in the literature to deal with the non-differentiability aspect of the problem formulation, including methods that employ sub-gradient iterations \cite{bertsekas1999nonlinear,polyak1987introduction,nesterov2004introductory}, cutting-plane techniques \cite{kelley1960cutting}, or proximal  iterations \cite{parikh2013proximal,bach2012optimization}. This work focuses on the class of sub-gradient methods for the reasons explained in the sequel. The sub-gradient technique is closely related to the traditional gradient-descent method\cite{bertsekas1999nonlinear,polyak1987introduction} where the actual gradient is replaced by a sub-gradient at points of non-differentiability. It is one of the simplest methods in current practice but is known to suffer from slow convergence. For instance, it is shown in \cite{nesterov2004introductory} that, for {\em convex} cost functions, the optimal convergence rate that can be delivered by sub-gradient methods in {\em deterministic} optimization problems cannot be faster than $O(1/\sqrt{i})$ under worst case conditions, where $i$ is the iteration index. Under some adjustments to the update equation through the use of weight averaging, it was shown in \cite{lacoste2012simpler} that this rate can be improved to $O(1/i)$.\vspace{-2mm}

	\subsection{The Significance of  Subgradient Algorithms}
	Still, there are at least three strong reasons that motivate a closer examination of the limits of performance of sub-gradient learning algorithms. First, the explosive interest in large-scale and  big data scenarios favors the use of simple and computer-efficient algorithmic structures, of which the sub-gradient technique is a formidable example. Second, it is becoming increasingly evident that more sophisticated optimization iterations do not necessarily ensure improved performance when dealing with complex models and data structures \cite{bousquet2008tradeoffs,bottou2012stochastic,polyak1987introduction,towfic2014stability}. This is because the assumed models, or the adopted cost functions, do not always reflect faithfully the underlying problem structure. In addition, the presence of noise in the data generally implies that a solution that may be perceived to be optimal is actually sub-optimal due to perturbations in the data and models. Third, it turns out that a clear distinction needs to be made between optimizing {\em deterministic} costs \cite{bertsekas1999nonlinear,polyak1987introduction,nesterov2004introductory}, where the cost function is known completely beforehand, and optimizing {\em stochastic} costs, where the cost function is actually unavailable due to its dependence on the unknown probability distribution of the data. Stochastic problem formulations are very common in applications arising in machine learning problems, adaptation, and estimation. We will show that sub-gradient algorithms have surprisingly favorable behavior in the stochastic setting.
	
	Motivated by these remarks, we therefore examine in some detail the performance of {\em stochastic sub-gradient} algorithms for the minimization of non-differentiable convex costs. Our analysis will reveal some interesting properties when these algorithms are used in the context of continuous adaptation and learning (i.e., when actual sub-gradients cannot be evaluated but need to be {\em approximated} continually in an {\em online} manner). The study is carried out for both cases of single stand-alone agents in this part and for multi-agent networks in Part II\cite{ying16ssgd2}. We start with single-agent learning and establish some revealing conclusions about how fast and how well the agent is able to learn. Extension of the results to the multi-agent case will require additional effort due to the coupling that exists among neighboring agents. Interestingly, the same broad conclusions will continue to hold in this case with proper adjustments. \vspace{-2mm}
	
	\subsection{Contributions and Relation to Prior Literature}
	In order to examine the performance of {\it stochastic} sub-gradient implementations, it is necessary to introduce some assumptions on the gradient noise process (which is the difference between a true sub-gradient and its approximation). Here we will {\em diverge} in a noticeable way from assumptions commonly used in the literature for two reasons (see Sec.~\ref{subsection.assump} for further explanations). First, we shall introduce {\em weaker} assumptions than usually adopted in prior works and, secondly and more importantly, we shall show that our assumptions are {\em automatically} satisfied for important cases of interest (such as SVM, LASSO, Total Variation). In contrast, these applications do not satisfy the traditional assumptions used in the prior literature and, therefore, conclusions derived based on these earlier works are not directly applicable to these problems. For example, it is common in the literature to assume that the cost function has a bounded gradient \cite{nemirovski2009robust,boyd2008stochastic,polyak1987introduction,Agarwal2012Information,lacoste2012simpler}; this condition is not even satisfied by quadratic costs whose gradient vectors are affine in their parameter and therefore grow unbounded. This condition is also in direct conflict with strongly-convex costs \cite{Agarwal2012Information}. By weakening the assumptions, the analysis in this work becomes more challenging. At the same time, the conclusions that we arrive at will be stronger and more revealing, and they will apply to a broader class of algorithms and scenarios.

	A second aspect of our study is that we will focus on the use of {\em constant} step-sizes in order to enable continuous adaptation and learning. Since the step-size is assumed to remain constant, the effect of gradient noise is always present and does not die out, as would occur if we were using instead a diminishing step-size of the form $\mu(i)=\tau/i$ for some $\tau>0$ as is common in many other studies\cite{towfic2016excess,shalev2011pegasos,boyd2008stochastic}. Such diminishing step-sizes annihilate the gradient noise term asymptotically albeit at the cost of turning off adaptation in the long run. When this happens, the learning algorithm loses its ability to track drifts in the solution. In contrast, a constant step-size keeps adaptation alive and endows the learning algorithm with an inherent tracking mechanism: if the minimizer that we are seeking drifts with time due, for example, to changes in the statistical properties of the data, then the algorithm will be able to track the new location since it is continually adapting \cite{sayed2011adaptive}. This useful tracking feature, however, comes at the expense of a persistent gradient noise term that never dies out. The challenge in analyzing the performance of learning algorithms in the constant adaptation regime is to show that their feedback mechanism induces a stable behavior that reduces the variance of the gradient noise to a small level and that ensures convergence of the iterates to within a small $O(\mu)$-neighborhood of the desired optimal solution. Moreover, and importantly, it turns out that constant step-size adaptation is not only useful under non-stationary conditions when drifts in the data occur, but it is also useful even under stationary conditions when the minimizer does not vary with time. This is because, as we will see, the convergence {\color{black}towards the steady-state regime} will now be guaranteed to occur at an exponential (i.e., linear rather than sub-linear)  rate, $O(\alpha^i)$ for some $\alpha\in(0,1)$, which is much faster than the $O(1/i)$ rate that would be observed under  diminishing step-size implementation for strongly-convex costs.
	
	A third aspect of our contribution is that it is known that sub-gradient methods are not {\em descent} methods. For this reason, it is customary to employ pocket variables (i.e., the best iterate)\cite{bertsekas1999nonlinear,nesterov2004introductory,shor2012minimization,kiwiel1985methods}  or arithmetic averages \cite{shalev2011pegasos} to smooth out the output. However, as the analysis will reveal, the pocket method is not practical in the {\em stochastic} setting (its implementation requires knowledge of unavailable information), and the use of arithmetic averages \cite{moulines2011non} does not match the convergence rate derived later in Sec. \ref{sect.single_analysis}. We shall therefore propose an alternative weighted averaging scheme with an exponentially-decaying window applied to the weight iterates. Similar, but different, weighting schemes applied to the data directly have been used in other contexts in the design of adaptive\cite{sayed2011adaptive} and reinforcement learning\cite{sutton1998reinforcement} schemes. 
	We shall show that the proposed averaging technique does not degrade convergence performance and is able to match the results derived later in Sec. \ref{sect.single_analysis}.
	
	{\em Notation}: We use lowercase letters to denote vectors, uppercase
	letters for matrices, plain letters for deterministic
	variables, and boldface letters for random variables. We also
	use $(\cdot)^{\sf T}$  to denote transposition, $(\cdot)^{-1}$ for matrix inversion,
	$\mbox{\sf Tr}(\cdot)$ for the trace of a matrix, $\lambda(\cdot)$ for the eigenvalues of
	a matrix, $\|\cdot\|$ for the 2-norm of a matrix or the Euclidean
	norm of a vector, and $\rho(\cdot)$ for the spectral radius of a matrix.
	Besides, we use $ A \geq  B$ to denote that $A - B$ is positive
	semi-definite, and $p\succ 0$ to denote that all entries of vector $p$ are positive.

	\section{Problem Formulation}
	\subsection{Problem Formulation}
	We consider the problem of minimizing a risk function, $J(w):\real^{M}\rightarrow\real$, which is assumed to be expressed as the expected value of some loss function, $Q(w; \x)$, namely,
	\begin{equation}
	w^\star \define \argmin_w J(w), \label{eq.originProb}
	\end{equation}
	where we {\color{black} assume $J(w)$ is strongly convex with its unique minimizer denoted by $w^{\star}$}, and where
	\be J(w)\define \Ex\, Q(w;\x).\ee
	Here, the letter $\x$ represents the random data and the expectation operation is performed over the distribution of this data. Many problems in adaptation and learning involve risk functions of this form, including, for example, mean-square-error designs and support vector machine (SVM) solutions --- see, e.g.,
	\cite{sayed2011adaptive,theodoridis2008Pattern,bishop2006pattern}.  For generality, we allow
	the risk function $J(w)$ to be {\em non-differentiable}. This situation is common in machine learning formulations, e.g., in SVM costs and in regularized sparsity-inducing formulations; examples to this effect are provided in the sequel. 
	
	In this work, we examine in some detail the performance of stochastic {\em sub-gradient} algorithms for the minimization of (\ref{eq.originProb}) and reveal some interesting properties when these algorithms are used in the context of continuous adaptation and learning (i.e., when actual sub-gradients cannot be evaluated but need to be {\em approximated} continually in an online manner). This situation arises when the probability distribution of the data is not known beforehand, as is common in practice. This is because in many applications, we only have access to data realizations but not to their actual distribution. 
	
	\subsection{Stochastic Sub-Gradient Algorithm}
	To describe the sub-gradient algorithm, we first recall that the sub-gradient of a {\color{black} convex} function $J(w)$ at any arbitrary point $w_0$ is defined as any vector $g\in\real^{M}$ that satisfies:
	\be
	J(w)\; \geq\; J(w_0) + g\tran(w-w_0),\;\; \forall w \label{eq.3}
	\ee
	We shall often write $g(w_0)$, instead of simply $g$, in order to emphasize that it is a sub-gradient vector at location $w_0$. We note that sub-gradients are generally non-unique. Accordingly, a related concept is that of the {\it sub-differential} of $J(w)$ at $w_0$, denoted by $\partial J(w_0)$. The sub-differential is defined as the set of all possible sub-gradient vectors at $w_0$:
	\be
	\partial J(w_0) \define \left\{g \;|\; J(w) \geq J(w_0) + g\tran(w-w_0),\;\; \forall w \right\}.
	\ee
	In general, the sub-differential $\partial J(w_0)$ is a set and it will collapse to a single point if, and only if, the cost function is differentiable at $w_0$\cite{nesterov2004introductory}; in that case, the sub-gradient vector will coincide with the actual gradient vector at location $w_0$.
	
	Referring back to problem (\ref{eq.originProb}), the traditional sub-gradient method to minimizing the risk function $J(w)$ takes the form:
	\be
	w_{i} =w_{i-1} - \mu\,  g (w_{i-1}),\;\;\;\;i\geq 0\label{lalkdjh13.13lk1lk3}
	\ee
	where $g(w_{i-1})$ refers to {\color{black} one particular choice of a sub-gradient vector for $J(w)$ at location $w_{i-1}$}, and $\mu>0$ is a small step-size parameter.
	{\color{black}Since sub-gradients are non-unique, in construction (\ref{lalkdjh13.13lk1lk3}), it is assumed that once a form for $g(w)$ is selected, that choice remains invariant throughout the adaptation process. That is, the user selects one choice for $g(w)$ and sticks to it throughout the adaptation process. It is not the case that $g(w)$ can sometimes be chosen in one way in one iteration and then in another way in another iteration (we will illustrate this point in examples given further ahead --- see, e.g., (\ref{AAA})).}

	Now, in the context of adaptation and learning, we usually do not know the exact form of $J(w)$ because the distribution of the data is not known to enable computation of $\Ex Q(w;\x)$ and its gradient vector. As such,  true sub-gradient vectors for $J(w)$ cannot be determined and they will need to be replaced by stochastic approximations evaluated from streaming data; examples to this effect are provided in the sequel in the context of support-vector machines and LASSO sparse designs. Accordingly, we replace the deterministic iteration (\ref{lalkdjh13.13lk1lk3}) by the following stochastic iteration\cite{bertsekas1999nonlinear,nesterov2004introductory,shor2012minimization,kiwiel1985methods}:
	\be
	\w_{i} =\w_{i-1} - \mu\,  \widehat{g} (\w_{i-1}), \label{eq.stoSubGrad1}
	\ee
	where the successive iterates, $\{\w_i\}$, are now random variables (denoted in boldface) and $\widehat{g}(\cdot)$ represents an  approximate sub-gradient vector at location $\w_{i-1}$ estimated from data available at time $i$. The difference between an actual sub-gradient vector and its approximation is referred to as {\em gradient noise} and is denoted by
	\be
	\s_i(\w_{i-1}) \define \widehat{g} (\w_{i-1}) - g (\w_{i-1}).\label{eq.stoSubGrad1.22}
	\ee
	% {\color{black}where the notation $\widehat{g}(\w_{i-1})$ refers to the approximation adopted for the particular sub-gradient vector used in (\ref{lalkdjh13.13lk1lk3}). We will clarify this construction below in Section \ref{sec.example1} and provide illustrative examples. }
	
	\subsection{Examples: SVM and LASSO}\label{sec.example1}
	\noindent To illustrate the construction, we list two examples dealing with support vector machines (SVM) \cite{cortes1995support} and the LASSO problem \cite{tibshirani1996regression}; the latter is also known as the sparse LMS problem or basis pursuit \cite{chen2009sparse,chen1998atomic,hastie2009elements}. We will be using these two problems throughout the manuscript to illustrate our findings. \\

		\noindent {\bf Example 1 (SVM problem).}  The two-class SVM formulation deals with the problem of determining a separating hyperplane, $w\in\real^{M}$, in order to classify feature vectors, denoted by $\h\in\real^{M}$, into one of two classes: $\bm{\gamma}=+1$ or $\bm{\gamma}=-1$. The regularized SVM risk function is of the form:
	\bq
	J^{\rm svm}(w) &\define& \frac{\rho}{2} \| w\|^2 + \Ex \left( \max \left\{0, 1-\bgamma  \h\tran w \right\} \right), \label{eq.expSVM}
	%&\define& \frac{1}{2} \| w\|^2 + C\Ex \left( \max \left\{0, 1-\bgamma  \h\tran w \right\} \right)
	\eq
	where $\rho>0$ is a regularization parameter. We are generally given a collection of independent training data, $\{\bm{\gamma}(i),\h_i\}$, consisting of feature vectors and their class designations and assumed to arise from jointly wide-sense stationary processes. Using this data, the loss function at time $i$ is given by
	\be
	\hspace{-0.5mm}Q^{\rm svm}(w;\{\bgamma(i),\h_i\})\hspace{-0.5mm}=\hspace{-0.5mm} \frac{\rho}{2} \| w\|^2 +  \max \left\{0, 1-\bgamma(i) \h_i\tran w \right\}
	\label{lalkd.1lk3l1k3}\ee
	where the second term on the right-hand side, which is also known as the hinge function, is  non-differentiable at all points $w$ satisfying $1-\bm{\gamma}(i)\h_i\tran w=0$. There are generally many choices for the sub-gradient vector at these locations $w$. One particular choice is:
	{\begin{align}
	{g}^{\rm svm}(\w) 
	%\stackrel{(a)}{=} &\; \rho \w + 
	% \Ex \partial_{\w}\left(\max \left\{0, 1-\gamma\h\tran \w\right\}\right) \nn\\ 
	=&\; \rho \w + \Ex\bgamma \h\, \mathds{I} [\bgamma\h\tran \w \leq 1],  \label{AAA}   
	\end{align}
	% {\color{blue} It is because of the definition. For each realization, it holds
	% \be
	% 	Q(w;{\gamma, h}) \geq Q(w_0;\gamma, h) + \widehat{g}\tran (w-w_0)
	% \ee
	% Then taking the expectation w.r.t. data over both side, we can conclude that $\Ex \widehat{g}$ is a valid sub-gradient for $J(w)$. The difficulty in sub-differential is how to show the reverse is always true, i.e., why  $\Ex \widehat{g}$ for all feasible $\widehat{g}$ consist of all elements in sub-differential. 
%	Also, expectation is indeed exchangeable with differential in general. Yes, it is not always. But we never care those cases. Like in the derivation of SGD, to prove the zero-mean property of gradient noise need that property. None of engineering paper will spend paragraph to discuss that. Why? It is equivalent to the question of exchangeable of differentiation and integration. The conclusion is when the  integration of differential function is Riemann integrable it is always correct. This means when our expectation of gradient is well-defined, i.e., the expectation of approximate gradient exists, it is exchangeable. Thus, in this paper, I will write the exchange of expectation and differential freely without explanation. 
	where the indicator function $\mathbb{I}[a]$ is defined as follows:
	\be
	\mathds{I}[a]\;=\;\left\{\begin{array}{ll}1,&\mbox{\rm if statement $a$ is true}\\0,&\mbox{\rm otherwise}\end{array}\right.
	\ee
%	To get this sub-gradient, we use the property that the order of differentiable and expecation operations under continuous convex function is exchangeable\cite{Wets89stochastic}.
	The choice (\ref{AAA}) requires the computation of an expectation operator, which is infeasible since the distribution of the data is not known beforehand. One approximation for this particular sub-gradient choice at iteration $i$ is the construction
	\be
	\widehat{g}^{\rm svm}(\w_{i-1}) = \rho \w_{i-1} + \bgamma(i) \h_i\, \mathds{I} [\bgamma(i)\h\tran_i \w_{i-1} \leq 1],  \label{BBB}
	\ee
	where the expectation operator is dropped. We refer to (\ref{BBB}) as an instantaneous approximation for (\ref{AAA}) since it employs the instantaneous realizations $\{\bm{\gamma}(i),\h_i\}$ to approximate the mean operation in (\ref{AAA}). There can be other choices for the true sub-gradient vector at $\w_{i-1}$ and for its approximation. However, it is assumed that once a particular choice is made for the form of $g(\w_{i-1})$, as in (\ref{AAA}), then that and its approximation (\ref{BBB}), remain invariant during the operation of the algorithm. Using (\ref{AAA}) and (\ref{BBB}), the gradient noise process associated with this implementation of the SVM formulation is then given by
	\bq\s^{\rm svm}_i(\w_{i-1})=
	\bgamma(i) \h_i\, \mathds{I} [\bgamma(i)\h\tran_i \w_{i-1} \leq 1]  \nn\\
	-\Ex \bgamma \h\, \mathds{I} [\bgamma\h\tran \w_{i-1} \leq 1].\label{svm.noise}
	\eq
	}
	$\hfill \Box$

	\noindent {\bf Example 2 (LASSO problem).} The least-mean-squares LASSO formulation deals with the problem of estimating a sparse weight vector by minimizing a risk function of the form \cite{donoho2006stable,murakami2010sparse}:\footnote{Traditionally, LASSO refers to minimizing a deterministic cost function, such as $\|y-Ax\|^2+\lambda\|x\|_1$. However, we are interested in stochastic formulations, which motivates (\ref{lkadh,13k1jl3k}).}
	\be
	J^{\rm lasso}(w) \define \frac{1}{2}\Ex\|\bgamma - \h\tran w \|^2 +\delta\|w\|_1,\label{lkadh,13k1jl3k}
	\ee
	where $\delta>0$ is a regularization parameter and $\|w\|_1$ denotes the $\ell_1-$norm of $w$. In this problem formulation, the variable $\bm{\gamma}$ now plays the role of a desired signal, while $\h$ plays the role of a regression vector.  It is assumed that the data are zero-mean wide-sense stationary with second-order moments denoted by
	\be
	\r_{h\gamma}\define \Ex\h\bm{\gamma},\;\;\;R_{h}\define \Ex\h\h\tran.
	\ee
	It is generally assumed that $\{\bm{\gamma},\h\}$ satisfy a linear regression model of the form:
	\be
	\bgamma = \h\tran w^{o} + \n,
	\label{lalkd.13lkl1k3lk}\ee
	where $w^{o}\in\real^{M}$ is the desired unknown sparse vector, and $\n$ refers to an additive zero-mean noise component with finite variance $\sigma_n^2$ and independent of $\h$.  If we multiply both sides of (\ref{lalkd.13lkl1k3lk}) by $\h$ from the left and compute expectations, we find that $w^{o}$ satisfies the normal equations:
	\be
	r_{h\gamma}=R_h w^{o}.
	\ee
	We are again given a collection of independent training data, $\{\bm{\gamma}(i),\h_i\}$, consisting of regression vectors and their noisy measured signals. Using this data, the loss function at time $i$ is given by
	\be
	Q^{\rm lasso}(w;\{\bgamma(i),\h_i\})= \frac{1}{2}(\bgamma(i) - \h_i\tran w )^2   +\delta\|w\|_1,
	\label{lalkd791313}\ee
	where the second term on the right-hand side is again non-differentiable.  
	{\color{black}
		One particular choice for the sub-gradient vector is:
		\begin{align}
		g^{\rm lasso}(\w_{i-1})& = -\Ex\h(\bm{\gamma}-\h\tran \w_{i-1})+ \delta {\rm sgn}(\w_{i-1})\nn\\
%		\nn\\
%		&= \frac{\rm d}{{\rm d} \w_{i-1}}\left[\Ex\frac{1}{2}\left\|\h\tran (w^o-\w_{i-1})+\n\right\|^2\right] + \delta{\rm sgn}(\w_{i-1})\nn\\
%		&= \frac{\rm d}{{\rm d} \w_{i-1}}\left[\frac{1}{2} (w^o-\w_{i-1})\tran R_h (w^o-\w_{i-1})\right] + \delta{\rm sgn}(\w_{i-1})\nn\\
		&= - R_h(w^o-\w_{i-1}) + \delta {\rm sgn}(\w_{i-1})
		\label{isthel;alkdak2}\end{align}
		where the notation ${\rm sgn}(a)$, for a scalar $a$, refers to the sign function:
		\be
		\mbox{\rm sgn}[a]=\left\{\begin{array}{ll}+1,\;\;&a > 0\\
		0 \;\;&a > 0\\
		-1,&a<0\end{array}\right.
		\ee
		When applied to a vector $a$, as is the case in (\ref{isthel;alkdak2}), the sgn function is a vector consisting of the signs of the individual entries of $a$. 
		Similar to the previous example, it is infeasible to find the exact sub-graident (\ref{isthel;alkdak2}) since $R_h$ is unknown. Instead, we use the following instantaneous approximation for (\ref{isthel;alkdak2}):
		\begin{align}
		\widehat{g}^{\rm lasso}(\w_{i-1}) &= -\h_i(\bgamma(i) - \h_i\tran \w_{i-1}) + \delta {\rm sgn}(\w_{i-1})\nn\\
		&={\hspace{-0.08cm}}-\h_i\h_i\tran(w^\circ - \w_{i-1}) \hspace{-0.08cm}+ \hspace{-0.08cm}\delta{\rm sgn}(\w_{i-1})\hspace{-0.08cm}-\hspace{-0.08cm}\h_i\n(i).
		\label{isthel;alkdak}
		\end{align}
		It then follows that the gradient noise process in the LASSO formulation is given by
		\be
		\s^{\rm lasso}_i(\w_{i-1})=(R_h-\h_i\h_i\tran)(w^{\circ} -\w_{i-1} ) - \h_i\n(i)\label{lkadhj13l1k3lk1}.
		\ee
	}
	$\hfill \Box$

	\section{Modeling Conditions}\label{subsection.assump}
	In order to examine the performance of the stochastic sub-gradient implementation (\ref{eq.stoSubGrad1}), it is necessary to introduce some assumptions on the gradient noise process. We diverge here from assumptions that are commonly used in the literature for two main reasons. First, we introduce {\em weaker} assumptions than usually adopted in prior works and, secondly and more importantly, we show that our assumptions are automatically satisfied by important cases of interest (such as SVM and LASSO). In contrast, these applications do not satisfy the traditional assumptions used in the literature and, therefore, conclusions derived based on these earlier works are not directly applicable to SVM and LASSO problems. We clarify these remarks in the sequel.
	
	{\color{black} First, we emphasize, as explained above, that it is assumed that the particular construction for the sub-gradient function at location $\w_{i-1}$ remains invariant during the operation of the algorithm, as well as its instantaneous approximation.} 
	%Now, recall from (\ref{eq.originProb}) that $w^{\star}$ denotes the global minimizer that we are seeking. The first set of conditions on the gradient noise process below essentially require that the construction of the approximate sub-gradient vector should not introduce bias and that its error variance should decrease as the quality of the iterate approaches the optimal solution, $w^{\star}$. Both of these conditions are sensible and, moreover, they will be shown to be  satisfied by, for example, SVM and LASSO constructions. More formally, we require the gradient noise process to satisfy the following two conditions.
	
	\begin{assumption}[\sc Conditions on gradient noise]\label{asm.noise} The first and second-order conditional moments of the gradient noise process satisfy the following conditions:
		\bq
		\hspace{-0.8cm}		\Ex [\,s_{i}(\w_{i-1}) \,|\,\bm{\cal F}_{i-1}\,] &=& 0\label{usadkh.13lk1l3k2}, \\
		\hspace{-0.8cm}		\Ex [\,\|s_{i}(\w_{i-1})\|^2 \,|\,\bm{\cal F}_{i-1}\, ] &\leq& \beta^2 \|w^{\star}-\w_{i-1}\|^2 +\sigma^2\label{usadkh.13lk1l3k},
		\eq
		for some constants $\beta^2\geq0$ and $\sigma^2\geq 0$, and where the notation $\bm{\cal F}_{i-1}$ denotes the filtration (collection) corresponding to all past iterates:
		\be \bm{\cal F}_{i-1}\;=\;\mbox{\rm filtration by $\{\w_{j},\;j\leq i-1\}$}.
		\ee\qd
	\end{assumption}
	\noindent Conditions (\ref{usadkh.13lk1l3k2}) and (\ref{usadkh.13lk1l3k}) essentially require that the construction of the approximate sub-gradient vector should not introduce bias and that its error variance should decrease as the quality of the iterate improves. Both of these conditions are sensible and, moreover, they will be shown to be satisfied by, for example, SVM and LASSO constructions.\\
	%{\color{blue} The "corresponding" means the approximate sub-gradient is a unbiased expectation of that "corresponding" actual sub-gradient.}
	\begin{assumption}[\sc Strongly-convex risk function]\label{asm.strong}
		The risk function is assumed to be $\eta-$strongly-convex (or, simply, strongly-convex), i.e., there exists an $\eta>0$ such that
		\begin{align}
			J(\theta w_1+(1-\theta)w_2) \;\leq&\; \theta J(w_1) + (1-\theta) J(w_2)  \nn\\
			&\hspace{2mm}- \frac{\eta}{2} \theta(1-\theta)\|w_1-w_2\|^2,
		\end{align}
		for any $\theta\in[0,1]$, $w_1,$ and $w_2$. The above condition is equivalent to requiring \cite{polyak1987introduction}:
		\be
		J(w_1)\geq J(w_2) + g(w_2)\tran(w_1-w_2)+ \frac{\eta}{2} \|w_1-w_2\|^2 \label{assump.strongCVX},
		\ee
		for any $g(\cdot)\in \partial J(w_2)$.
		Under this condition, the minimizer $w^\star$ exists and is unique.
		\hfil\qd
	\end{assumption}
	
	\smallskip
	{\color{black}\noindent  Assumption~\ref{asm.strong} is relatively rare in works on non-differentiable function optimization} because it is customary in these earlier works to focus on studying {\em piece-wise} linear risks; these are useful non-smooth functions but they do not satisfy the strong-convexity condition. In our case, strong-convexity is not a restriction because in the context of adaptation and learning, it is common for the risk functions to include a regularization term, which generally  helps ensure strong-convexity.
	
	\begin{assumption}[\sc Sub-gradient is Affine-Lipschitz]\label{asm.sbg}
		It is assumed that the sub-gradient choice used in (\ref{lalkdjh13.13lk1lk3}) is affine Lipschitz, meaning that there exist constants $c\geq0$ and $d\geq 0$ such that the following property holds:
		\be
		\|g(w_1) -{\color{black}g'}(w_2)\| \;\leq\; c\|w_1 -w_2\| + d, \quad \forall w_1,w_2 \label{assump.subGradient},
		\ee
		for any ${\color{black}g'}(\cdot) \in \partial J(\cdot)$.
		\qd
	\end{assumption}
%	{\color{blue}
%	\begin{assumption}[\sc Sub-gradient is Weakly Affine-Lipschitz]\label{asm.sbg2}
%		It is assumed that the sub-gradient of the risk function, $J(w)$, is affine Lipschitz, which means that there exist constants $c\geq0$ and $d\geq 0$ such that
%		\be
%		\|g(w_1) -{g}(w_2)\| \;\leq\; c\|w_1 -w_2\| + d, \quad \forall w_1,w_2 \label{assump.subGradient2},
%		\ee
%		and for one particular choice $g(\cdot) \in \partial J(\cdot)$.
%		\qd
%	\end{assumption}
%	}

	It is customary in the literature to use in place of Assumption~\ref{asm.sbg} {\color{black} a more restrictive condition that requires the sub-gradient to be bounded  \cite{bertsekas1999nonlinear,nemirovski2009robust,Agarwal2012Information}, i.e., to require instead of (\ref{assump.subGradient}) that
	\be
	\|g(w)\| \leq d_1,\quad \forall w, \;g(w)\in\partial J(w)\label{lalkd.1lk31lk3lk1k3}.
	\ee
	which is also equivalent to assuming the risk function is Lipschitz:
	\be
		\|J(w_1) - J(w_2)\| \leq d_1 \|w_1 - w_2\|, \;\;\;\;\forall w_1,w_2
	\ee
	Such a requirement does not even hold for quadratic risk functions, whose gradient vectors are affine in $w$ and, therefore, grow unbounded! Even more, it can be easily seen that requirement
	(\ref{lalkd.1lk31lk3lk1k3}) is always conflicted with the strong-convexity assumption. For example, if we set $w_1=w$ and $w_2=w^\star$ in (\ref{assump.strongCVX}), we would obtain:
	\be
	J(w)\geq J(w^\star) +\frac{\eta}{2} \|w-w^\star\|^2.
	\label{relation.1}\ee
	Likewise, if we instead set $w_1=w^\star$ and $w_2=w$ in (\ref{assump.strongCVX}), we would  obtain:
	\be
	J(w^\star)\geq J(w) + g(w)\tran(w^\star-w)+\frac{\eta}{2} \|w-w^\star\|^2.
	\label{relation.2}\ee
	Adding relations (\ref{relation.1})--(\ref{relation.2}) we arrive at the strong monotonicity property:
	\bq
	g(w)\tran (w-w^\star)& \geq& \eta\|w-w^\star\|^2,
	\eq
	which implies, in view of the Cauchy-Schwarz inequality, that
	\be \|g(w)\| \geq \eta\|w-w^\star\| \label{eq.sgb_increase}.
	\ee
	In other words, the strong-convexity condition (\ref{assump.strongCVX}) implies that the sub-gradient satisfies
	(\ref{eq.sgb_increase}); and this condition is in clear conflict with the bounded requirement in (\ref{lalkd.1lk31lk3lk1k3}).\vspace{-0.5mm}
	
	One common way to circumvent the difficulty with the bounded requirement (\ref{lalkd.1lk31lk3lk1k3}) and to ensure that it holds is to restrict the domain of $J(w)$ to some bounded convex set, say, $w\in{\cal W}$, in order to bound its sub-gradient vectors, and then employ a projection-based sub-gradient method (i.e., one in which each iteration is followed by projecting $\w_i$ onto ${\cal W}$). However, this approach has at least three difficulties. First, the unconstrained problem is transformed into a more demanding constrained problem involving an extra projection step. Second, the projection step may not be straightforward to perform unless the set ${\cal W}$ is simple enough. Third,  the bound that results on the sub-gradient vectors by limiting $w$ to ${\cal W}$ can be very loose, which will be dependent on the diameter of the convex set ${\cal W}$.\vspace{-0.5mm}

	For these reasons, we do not rely on the restrictive condition (\ref{lalkd.1lk31lk3lk1k3}) and introduce instead the more relaxed affine-Lipschitz condition (\ref{assump.subGradient}). This condition is weaker than  (\ref{lalkd.1lk31lk3lk1k3}). Indeed, it can be verified that (\ref{lalkd.1lk31lk3lk1k3}) implies (\ref{assump.subGradient}) but not the other way around. To see this, assume (\ref{lalkd.1lk31lk3lk1k3}) holds. Then, using the triangle inequality of norms we have
	\bq
	\|g(w_1)-{\color{black}g'(w_2)}\|& \leq &\|g(w_1)\| +\|{\color{black}g'(w_2)} \|\nn\\
	&\leq& d_1\;+\;d_1\nn\\
	&=&2d_1,
	\eq
	which is a special case of (\ref{assump.subGradient}) with $c=0$ and $d=2d_1$. We now verify that important problems of interest satisfy Assumption \ref{asm.sbg} but {\em not} the traditional condition (\ref{lalkd.1lk31lk3lk1k3}).
	
	\noindent {\bf Example 3 (SVM problem).} We revisit the SVM formulation from Example 1. The risk function (\ref{eq.expSVM}) is strongly convex due to the presence of the quadratic regularization term, ${\frac{\rho}{2}}\|w\|^2$, and since the hinge function $\Ex\max \{0, 1-\bgamma  \h\tran w \}$ is convex. The zero-mean property of the  gradient noise process (\ref{svm.noise}) is obvious in this case. With respect to the variance condition, we note that
	\begin{align}
		\Ex [\|s^{\rm svm}_i(\w_{i-1})\|^2 | \bm{\cal F}_{i-1}]=&\; \Ex \h_{i}\tran \h_{i}\,  \mathds{I} [\bgamma(i)\h_{i}\tran \w_{i-1} \leq 1]\nn\\
		&\; - \|\Ex \bgamma \h\, \mathds{I} [\bgamma\h\tran \w_{i-1} \leq 1]  \|^2 \nn\\
		\leq &\;\Ex \h_{i}\tran \h_{i} \, \mathds{I} [\bgamma(i)\h_{i}\tran \w_{i-1} \leq 1]\nn\\
		\leq &\;\Ex \h_{i}\tran \h_{i}\nn \\
		= &\; \Tr(R_h)\label{use.lkadlkl123},
	\end{align}
	so that Assumption~\ref{asm.noise} is satisfied with $\beta^2=0$ and $\sigma^2=\mbox{\rm Tr}(R_h)$. Let us now verify Assumption~\ref{asm.sbg}. For that purpose, we first note that {\color{black} the sub-differentiable of the SVM risk is given by:
		\bq
			\partial_w J^{\rm svm}(w) &=& \rho w + \partial_{w} \left(\Ex \max\left\{0, 1-\bgamma\h\tran w \right\}\right)\nn\\
			&\stackrel{(a)}{=}& \rho w + \Ex \partial_{w} \left(\max\left\{0, 1-\bgamma\h\tran w \right\}\right)\nn\\
			&\stackrel{(b)}{=}&  \rho w -\Ex \bgamma\h \cdot\mathcal{T}_1(1-\bgamma\h\tran w)
		\eq
		where in step (a) we use the fact that the SVM loss function is continuous convex and,
therefore, we can exchange the order of the sub-differential operation with the expectation
operation\cite[Prop. 2.10]{Wets89stochastic}. In step (b), the operator $\mathcal{T}_1$ is defined by
		\be
		\mathcal{T}_1(x) \define 
		\left\{
		\begin{aligned}
			1, \;\;\;&\;\;\; x>0\\
			[0,1],\;\;\;&\;\;\; x=0\\
			0, \;\;\;&\;\;\; x<0
		\end{aligned}
		\right.
		\ee

%	and we use notation $g'[{\cal T}(x)]$ means one arbitrary function that contained by operator $\mathcal{T}$, i.e.
%	\be
%		g'[\mathcal{T}(x)] = 
%		\left\{
%		\begin{aligned}
%			1, \;\;\;&\;\;\; x>0\\
%			t,\;\;\;&\;\;\; x=0, t\in[0,1]\\
%			0, \;\;\;&\;\;\; x<0
%		\end{aligned}
%		\right.
%	\ee
	Different choices for the value of ${\cal T}_1(x)$ at the location $x=0$ lead to different sub-gradients. We can therefore express any arbitrary sub-gradient in the form
	\be
		{g^{\rm svm}}'(w) = \rho w -\Ex \bgamma\h \cdot\mathcal{T}'_1(1-\bgamma\h\tran w)\label{feawfew.fesd1}
	\ee
	where the notation $\mathcal{T}'_1(x)$ means that we pick a particular value within the range $[0,1]$ to define the sub-gradient (\ref{feawfew.fesd1}). It now follows that
	%Now we compared it with one particular choice of $g^{\rm svm}(w) = \rho w + \Ex\bgamma \h\, \mathds{I} [\bgamma\h\tran \w_{i-1} \leq 1]$, it holds that:
	\begin{align}\label{subs,aldlk13}
		&\hspace{-8mm}\|g^{\rm svm}(w_1)-{g^{\rm svm}}'(w_2)\|\nn\\
		%	 =&\; \big\| \rho w_1 + \Ex \bgamma \h\, \mathds{I} [\bgamma\h\tran w_1 \leq 1]  -\nn\\&\hspace{0.6cm}  \rho w_2
		%	 - \Ex \bgamma \h\, \mathds{I} [\bgamma\h\tran w_2 \leq 1] \big\| \nn\\
		\leq&\;\rho \|w_1-w_2\|+ \big\| \Ex\bgamma \h\, \mathds{I} [\bgamma\h\tran \w_{i-1}\leq1]\big\|\nn\\
		&\; {} +  \big\|\Ex \bgamma\h \cdot \mathcal{T}'_1(1-\bgamma\h\tran w)\big\|.
	\end{align}
	Note further that:
	\begin{align}
		\big\|\Ex \bgamma\h \cdot \mathcal{T}'_1(1-\bgamma\h\tran w)\big\|^2\leq&\; \Ex\|\bgamma\h \cdot\mathcal{T}'_1(1-\bgamma\h\tran w)\|^2\nn\\
		= &\; \Ex \h\tran\h \cdot{[\mathcal{T}'_1(1-\bgamma\h\tran w)]}^2\nn\\
		\leq & \; \Ex \h\tran\h \nn\\
		=& \Tr(R_h)
	\end{align}
	where the last inequality is because $\h\tran\h $ is non-negative and  $[\mathcal{T}_1'(x)]^2$ is uniformly bounded by one. Substituting into (\ref{subs,aldlk13}) gives
	\be
	\|g^{\rm svm}(w_1)-{g^{\rm svm}}'(w_2)\|\leq \rho \|w_1-w_2\| + 2[\Tr(R_h)]^{1/2},
	\ee
	which is of the same form as (\ref{assump.subGradient}) with parameters $c=\rho$ and $d=2[\Tr(R_h)]^{1/2}$.
	$\hfill \Box$
	%\qd
	}
	\bigskip

	\noindent {\bf Example 4 (LASSO problem).} We revisit the LASSO formulation from Example 2. Under the condition that $R_h>0$, the risk function (\ref{lkadh,13k1jl3k}) is again strongly-convex because the quadratic term,
	$\frac{1}{2}\Ex\|\bm{\gamma}-\h\tran w\|^2$, is strongly convex and the regularization term, $\delta\|w\|_1$, is convex. With regards to the gradient noise process (\ref{lkadhj13l1k3lk1}), it was already shown in Eq.~(3.22) in \cite{sayed2014adaptation} that, conditioned on past iterates, it has zero-mean and its conditional variance satisfies:
	\begin{align}
		\Ex [\|s_i^{\rm lasso}(\w_{i-1})\|^2 | \bm{\cal F}_{i-1}]& \leq  a \| w^\circ-{\w}_{i-1}\|^2 +\sigma_n^2 \Tr(R_h) \nn\\
		&\leq 2a \| w^{\star}-\w_{i-1}\|^2 +\sigma_n^2 \Tr(R_h)\nn\\
		&\hspace{4mm} +2a\|w^\circ -w^\star\|^2,
	\end{align}
	where $a=2\Ex\|R_h - \h_i \h_i\tran \|^2$. It follows that  Assumption~\ref{asm.noise} is satisfied with $\beta^2=2a$ and $\sigma^2=\sigma_n^2 \Tr(R_h)+2a\|w^\circ -w^\star\|^2$. Let us now verify Assumption~\ref{asm.sbg}. {\color{black} For that purpose, we first note that the sub-differential set of the LASSO risk is given by:
	\begin{align}
	\partial_w J^{\rm lasso}(w)	%&= -\Ex\h(\bgamma - \h\tran w) + \delta\cdot {\cal T}_2(w)\nn\\
	&= -R_h(w^o-w) + \delta\cdot {\cal T}_2(w)
	\end{align}
	where the operator $\mathcal{T}_2(x)\in\real^M$ for a vector $x$ is defined as:
	\be
	\mathcal{T}_2(x)
	\hspace{-1.2mm}\define\hspace{-1.2mm}
	\ba{c}
	\mathcal{T}_2(x(1))\\
	\mathcal{T}_2(x(2))\\
	\vdots\\
	\mathcal{T}_2(x(M))
	\ea,\;\;
	\mathcal{T}_2(x(i)) \hspace{-1.2mm}\define \hspace{-1.2mm} 
	\left\{\hspace{-0.8mm}
	\begin{aligned}
		1, \;\;\;&\;\; x(i)>0\\
		[-1,1],\;\;\;&\;\; x(i)=0\\
		-1, \;\;\;&\;\; x(i)<0
	\end{aligned}
	\right.		
	\ee
	Different choices for the value of ${\cal T}_2(x)$ at the locations $x(i)=0$ lead to different sub-gradients. We can therefore express any arbitrary sub-gradient in the form
	\be
	{g^{\rm lasso}}'(w) = -R_h(w^o-w) + \delta \cdot{\cal T}_2'(w)\label{fwei.sfehwi}
	\ee
	where the notation $\mathcal{T}'_2(x)$ means that we pick particular values within the range $[-1,1]$ to define the sub-gradient (\ref{fwei.sfehwi}). It now follows that:
	\begin{align}
		&\hspace{-10mm}\|g^{\rm lasso}(w_1)-{g^{\rm lasso}}'(w_2)\|\nn\\~
		=&\; \| R_{h}w_1-R_{h}w_2+  \delta\cdot\big({\rm sgn}(w_1)-\mathcal{T}'_2(w_2)\big)\| 
	\end{align}
	Observing that the difference between any entries of $\mbox{\rm sgn}(w_1)$ and ${\cal T}_2'(w_2)$ cannot be larger than 2 in magnitude, we get
	\begin{align}
	\|g^{\rm lasso}(w_1)-{g^{\rm lasso}}'(w_2)\|	\leq&\; \|R_h\|\|w_1-w_2\|+2\delta \|\one\|\nn\\
		=&\; \|R_h\|\|w_1-w_2\|+2\delta M^{1/2},
	\end{align}
	}where $\one$ is the column vector with all its entries equal to one. We again arrive at a relation of the
	same form as (\ref{assump.subGradient}) with parameters $c=\|R_h\|$ and $d=2\delta M^{1/2}$.
	\hfill $\Box$

	\section{Performance Analysis}
	We now carry out a detailed mean-square-error analysis of the stability and performance of the stochastic sub-gradient recursion (\ref{eq.stoSubGrad1}) in the presence of gradient noise and for {\em constant} step-size adaptation. In particular, we will be able to show that linear (exponential) convergence can be attained at the rate $\alpha^i$ for some $\alpha \in (0,1)$.

	\subsection{Continuous Adaptation}
	Since the step-size is assumed to remain constant, the effect of gradient noise is continually present and does not die out, as would occur if we were using instead a diminishing step-size, say, of the form $\mu(i)=\tau/i$. Such diminishing step-sizes annihilate the gradient noise term asymptotically albeit at the expense of turning off adaptation in the long run. In that case, the learning algorithm will lose its tracking ability. In contrast, a constant step-size keeps adaptation alive and endows the learning algorithm with a tracking mechanism and, as the analysis will show, enables convergence towards the steady-state regime at an exponential rate, $O(\alpha^i)$, for some $\alpha\in (0,1)$. 
	\subsection{A Useful Bound}
	In preparation for the analysis, we first conclude from (\ref{assump.subGradient}) that the following useful condition also holds, involving squared-norms as opposed to the actual norms:
	\be
	\|g(w_1) - {\color{black}g'(w_2)}\|^2 \leq e^2\|w_1 -w_2\|^2 + f^2, \;\; \forall w_1,w_2,\; {\color{black}g'}\in\partial J \label{assump.sbg2},
	\ee
	where $e^2=2c^2$ and $f^2=2d^2$. This is because
%	for some nonnegative constants, $e^2,f^2$. Indeed, if we square both sides of (\ref{assump.subGradient}) we get
%	\begin{align}
%		\|g(w_1) - {\color{black}g'(w_2)}\|^2 &\leq c^2\|w_1 -w_2\|^2 + 2cd \|w_1-w_2 \| + d^2\nn \\
%		&\stackrel{(a)}{\leq}\hspace{-1mm} \left(c^2+\frac{2cd}{R}\right)\|w_1 -w_2\|^2 + d^2+2cdR, 
%		\label{conitnue.this}\end{align}
%	where the constant $R$ is any positive number that we are free to choose, and step (a) is because
%	\be
%	\|w_1-w_2\|\leq
%	\left\{
%	\begin{aligned}
%		& R,&{\rm if\ } \|w_1-w_2\|<R\\
%		& \|w_1-w_2\|^2/R, &{\rm if\ } \|w_1-w_2\|\geq R
%	\end{aligned}
%	\right.\label{eq.trick}.
%	\ee
%	It follows that the constants $\{e^2,f^2\}$ in (\ref{assump.sbg2}) can be taken as
%	\bq
%	e^2&\define &c^2+\frac{2cd}{R}\geq 0\label{eq.eAndc}\\
%	f^2&\define &d^2+2cdR\geq 0.
%	\eq
%	There is a second easier derivation for a bound of the form (\ref{assump.sbg2}) without the need to
%	introduce the parameter $R$. For example, observe that by squaring and appealing to Jensen's inequality we get:
	\bq
	\|g(w_1) - {\color{black}g'(w_2)}\|^2 &\leq& \Big(c\|w_1 -w_2\| + d\Big)^2\nn\\
	&\leq& 2c^2\|w_1 -w_2\|^2 + 2d^2.\label{eq.eAndc}%\label{fefew394}
	\eq
%	In this case, we would select $e^2=2c^2$ and $f^2=2d^2$. We continue with (\ref{fefew394}) due to its simplicity.
%	{\color{blue}Profess: Ok. I checked it again, it seems no need. I will use (\ref{fefew394}).}
	
	\subsection{Stability and Convergence}\label{sect.single_analysis}
	We are now ready to establish the following important conclusion regarding the stability and performance of the stochastic sub-gradient algorithm (\ref{eq.stoSubGrad1}); the conclusion indicates that the algorithm is stable and converges exponentially fast for sufficiently small step-sizes. But first, we explain our notation and the definition of a ``best'' iterate, denoted by $\w_i^{\rm best}$\cite{nesterov2004introductory}. This variable is useful in the context of sub-gradient implementations because it is known that sub-gradient directions do not necessarily correspond to real ascent directions (as is the case with actual gradient vectors for differentiable functions).
	
	At every iteration $i$, the risk value that corresponds to the iterate $\w_i$ is $J(\w_i)$. This value is obviously a random variable due to the randomness in the data used to run the algorithm. We denote the mean risk value by $\Ex J(\w_i)$. The next theorem examines how fast and how close this mean value approaches the optimal value, $J(w^{\star})$. To do so, the statement in the theorem relies on the {\em best pocket} iterate, denoted by $\w_i^{\rm best}$, and which is defined as follows. At any iteration $i$, the value that is saved in this pocket variable is the past iterate, $\w_j$, that has generated the smallest mean risk value up to that point in time, i.e.,
	\be
	\w^{\rm best}_i\;\define\;\argmin_{0\leq j\leq i}\;\Ex\,J(\w_j).
	\ee
	The statement below then proves that $\Ex J(\w_i^{\rm best})$ approaches {\color{black} a small neighborhood of size $O(\mu)$ around $J(w^{\star})$ exponentially fast:}
	\be
	\lim_{i\rightarrow\infty}\;\Ex\,J(\w_i^{\rm best})\;\leq\;J(w^{\star})\;+\;O(\mu),
	\ee
	where the big-O notation $O(\mu)$ means in the order of $\mu$.
	
	\begin{theorem}[{\sc Single agent performance}]
		Consider using the stochastic sub-gradient algorithm (\ref{eq.stoSubGrad1}) to seek the unique minimizer, $w^{\star}$, of the optimization problem (\ref{eq.originProb}), where the risk function, $J(w)$, is assumed to satisfy Assumptions ~\ref{asm.noise}--\ref{asm.sbg}. If the step-size parameter satisfies (i.e., if it is small enough):
		\be \mu< \frac{\eta}{e^2+\beta^2}\label{llkadkl.13lk1lk3l},\ee
		then it holds that
		{\color{black}
			\be
				\Ex J(\w_i^{\rm best})-J(w^\star) \; \leq \; \xi \cdot \alpha^i \Ex\big\|{\color{black}\w_{0}}-w^\star\big\|^2 + \mu(f^2+\sigma^2)/2, 
				%\;\;\; \forall\, i\geq 0 
				\label{result1}
			\ee
		 That is the convergence of $\Ex\,J(\w_i^{\rm best})$ towards the $O(\mu)-$neighborhood around $J(w^\star)$ occurs at the linear rate,  $O(\alpha^i)$, dictated by the parameter:
			\be
				\alpha \define 1-\mu\eta+\mu^2(e^2+\beta^2)=1-O(\mu)
			\ee
			Condition (\ref{llkadkl.13lk1lk3l}) ensures $\alpha\in(0,1)$.
		}
		In the limit:
		\be
		\lim_{i\to\infty}\Ex J(\w_i^{\rm best}) -J(w^\star) \;\leq\;  \mu(f^2+\sigma^2)/2\label{extendadlkad}.
		\ee
		{\color{black}
			That is, for large $i$, $\Ex J(\w_i^{\rm best})$ is approximately $O(\mu)$-suboptimal.	
		}
		\label{them.121}
		\end{theorem}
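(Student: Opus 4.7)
The approach is to derive a one-step Lyapunov-type recursion on $\Ex\|\tilde\w_i\|^2$, where $\tilde\w_i \define w^\star-\w_i$, and then extract the best-iterate bound via a weighted telescoping sum. Starting from (\ref{eq.stoSubGrad1}) and (\ref{eq.stoSubGrad1.22}), I would write $\tilde\w_i = \tilde\w_{i-1} + \mu\,g(\w_{i-1}) + \mu\,s_i(\w_{i-1})$, square both sides, and take conditional expectation given $\bm{\cal F}_{i-1}$. The cross-term involving $s_i$ vanishes by (\ref{usadkh.13lk1l3k2}), leaving three quantities to control: the inner product $2\mu(w^\star-\w_{i-1})\tran g(\w_{i-1})$, the sub-gradient squared norm $\mu^2\|g(\w_{i-1})\|^2$, and the conditional variance of $s_i(\w_{i-1})$.

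Each of the three terms is then bounded by one of the assumptions. Strong convexity (\ref{assump.strongCVX}) applied with $w_1=w^\star$, $w_2=\w_{i-1}$ yields $2\mu(w^\star-\w_{i-1})\tran g(\w_{i-1}) \leq -2\mu[J(\w_{i-1})-J(w^\star)] - \mu\eta\|\tilde\w_{i-1}\|^2$. Because $w^\star$ is the minimizer of $J$, the zero vector belongs to $\partial J(w^\star)$, so the squared affine-Lipschitz bound (\ref{assump.sbg2}) with second argument $w^\star$ gives $\|g(\w_{i-1})\|^2 \leq e^2\|\tilde\w_{i-1}\|^2 + f^2$. The noise variance is bounded by (\ref{usadkh.13lk1l3k}). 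Substituting and taking full expectations produces the one-step recursion
\begin{equation}
\Ex\|\tilde\w_i\|^2 \;\leq\; \alpha\,\Ex\|\tilde\w_{i-1}\|^2 \;-\; 2\mu\bigl[\Ex J(\w_{i-1})-J(w^\star)\bigr] \;+\; \mu^2(f^2+\sigma^2),
\end{equation}
with $\alpha = 1-\mu\eta+\mu^2(e^2+\beta^2)$; the step-size condition (\ref{llkadkl.13lk1lk3l}) ensures $\alpha\in(0,1)$.

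To extract the best-iterate bound, I would rearrange the recursion into $2\mu[\Ex J(\w_j)-J(w^\star)] \leq \alpha\,\Ex\|\tilde\w_j\|^2 - \Ex\|\tilde\w_{j+1}\|^2 + \mu^2(f^2+\sigma^2)$, multiply by $\alpha^{i-j}$, and sum over $j=0,\dots,i$. This choice of weights is crucial: the $\|\tilde\w\|^2$ block telescopes exactly to $\alpha^{i+1}\Ex\|\tilde\w_0\|^2 - \Ex\|\tilde\w_{i+1}\|^2$, while the noise contributions form a geometric series summing to $\mu^2(f^2+\sigma^2)(1-\alpha^{i+1})/(1-\alpha)$. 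Using $\Ex J(\w_j)-J(w^\star)\geq \Ex J(\w_i^{\rm best})-J(w^\star)\geq 0$ for $j\leq i$ and dividing by $\sum_{j=0}^i\alpha^{i-j}=(1-\alpha^{i+1})/(1-\alpha)$ delivers
\begin{equation}
\Ex J(\w_i^{\rm best})-J(w^\star) \;\leq\; \frac{(1-\alpha)\alpha^{i+1}}{2\mu(1-\alpha^{i+1})}\,\Ex\|\tilde\w_0\|^2 \;+\; \frac{\mu(f^2+\sigma^2)}{2},
\end{equation}
which matches (\ref{result1}) with $\xi=\alpha/(2\mu)$ after noting $(1-\alpha)/(1-\alpha^{i+1})\leq 1$. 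Letting $i\to\infty$ immediately yields (\ref{extendadlkad}).

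The main obstacle will be this last best-iterate step. A naive arithmetic-average bound $\Ex J(\w_i^{\rm best})-J(w^\star) \leq (i+1)^{-1}\sum_{j=0}^i [\Ex J(\w_j)-J(w^\star)]$ combined with an un-weighted telescoping would only deliver the classical $O(1/i)$ sub-linear sub-gradient rate. The geometric weighting $\alpha^{i-j}$---chosen precisely so that the $\|\tilde\w\|^2$ block telescopes---is what converts the one-step inequality into the linear-rate conclusion and reflects the combined effect of strong convexity and the affine-Lipschitz sub-gradient condition; everything else reduces to careful but routine algebra.
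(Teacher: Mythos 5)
Your proposal follows essentially the same route as the paper's proof: the identical one-step recursion obtained from strong convexity, the affine-Lipschitz bound anchored at the fact that $0\in\partial J(w^\star)$, and the gradient-noise variance condition, followed by the same geometrically weighted telescoping sum and division by $\sum_{j}\alpha^{i-j}$ to extract the best-iterate bound. The only detail you gloss over is that condition (\ref{llkadkl.13lk1lk3l}) directly yields $\alpha<1$ but not $\alpha>0$; the paper verifies positivity separately by noting that strong monotonicity forces $\eta\leq c$, hence $e^2=2c^2>\eta^2$ and $\alpha\geq 1-\eta^2/(4(e^2+\beta^2))>0$.
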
 \vspace{-1mm}
	\bp We introduce the error vector, $\widetilde{\w}_i = w^\star - \w_i$, and use it to deduce from (\ref{eq.stoSubGrad1})--(\ref{eq.stoSubGrad1.22}) the following error recursion:
	\be
	\widetilde{\w}_{i}=\widetilde{\w}_{i-1}+\mu g(\w_{i-1}) + \mu s_{i}(\w_{i-1}).
	\ee
	Squaring both sides and computing the conditional expectation we obtain:
	\begin{align}
	&\hspace{-4mm}\Ex[\|\widetilde{\w}_{i}\|^2\,|\,\bm{\cal F}_{i-1}\,]\nn\\
		=&\; \Ex[\,\|\widetilde{\w}_{i-1}+\mu g(\w_{i-1}) + \mu \s_{i}(\w_{i-1})\|^2 \,|\,\bm{\cal F}_{i-1}\,]\label{subckakjdk13lk13}\\
		\stackrel{(a)}{=}&\;\|\widetilde{\w}_{i-1}+\mu g(\w_{i-1})\|^2\;+\;\mu^2\Ex[\|\s_{i}(\w_{i-1})\|^2\,|\,\bm{\cal F}_{i-1}\,]\nn\\
		=&\;\|\widetilde{\w}_{i-1}\|^2+2\mu g(\w_{i-1})\tran \widetilde{\w}_{i-1} \nn\\
		&\;+  \mu^2\|g(\w_{i-1})\|^2+ \mu^2\Ex[\|\s_{i}(\w_{i-1})\|^2\,|\,\bm{\cal F}_{i-1}\,].
		\end{align}
	In step (a), we eliminated the cross term because, conditioned on $\bm{\cal F}_{i-1}$, the gradient noise process has zero-mean. Now, from the strong convexity condition (\ref{assump.strongCVX}), it holds that
	\be
	g(\w_{i-1})\tran \widetilde{\w}_{i-1} \leq J(w^\star) - J(\w_{i-1}) - \frac{\eta}{2} \|\widetilde{\w}_{i-1}\|^2 \label{eq.54}.
	\ee
	Substituting into (\ref{subckakjdk13lk13}) gives
	\begin{align}
		&\hspace{-4mm}\Ex[\|\widetilde{\w}_{i}\|^2\,|\,\bm{\cal F}_{i-1}\,]\nn\\
		\leq&\;\|\widetilde{\w}_{i-1}\|^2+ 2\mu\left(J(w^\star) - J(\w_{i-1}) - \frac{\eta}{2} \|\widetilde{\w}_{i-1}\|^2\right)+ \nn\\
		&  \;\;\;\;{} \mu^2\|g(\w_{i-1})\|^2+\mu^2\Ex[\|s_{i}(\w_{i-1})\|^2\,|\,\bm{\cal F}_{i-1}\,]\label{eq.mainIneq1}.
	\end{align}
	Referring to (\ref{assump.sbg2}), if we set $w_1=\w_{i-1}$, $w_2=w^\star$, and use the fact that {\color{black} there exists one particular sub-gradient $g'(w)$ satisfying $g'(w^\star)=0$}, we obtain:
	\be
	\|g(\w_{i-1})\|^2 \leq e^2\|\widetilde{\w}_{i-1}\|^2 +f^2.
	\ee
	Substituting into (\ref{eq.mainIneq1}), we get\vspace{-1mm}
	\be
	\Ex[\|\widetilde{\w}_{i}\|^2\,|\,\bm{\cal F}_{i-1}\,]\hspace{12cm}\nn
	\ee\vspace{-0.7cm}
	\begin{align}
		\leq&\;(1-\mu\eta+\mu^2e^2)\|\widetilde{\w}_{i-1}\|^2+2\mu J(w^\star) -  2\mu J(\w_{i-1})\nn\\
		&\;{}+ \mu^2 f^2+\mu^2\Ex[\|s_{i}(\w_{i-1})\|^2\,|\,\bm{\cal F}_{i-1}\,] \nn\\[-2mm]
		\stackrel{(\ref{usadkh.13lk1l3k})}{\leq}&\; (1-\mu\eta+\mu^2(e^2+\beta^2))\|\widetilde{\w}_{i-1}\|^2+2\mu J(w^\star)\nn\\
		&\;{} -  2\mu J(\w_{i-1})+ \mu^2 f^2+\mu^2 \sigma^2 \label{eq.mainIneq2}.
	\end{align}
	Taking expectation again we eliminate the conditioning on $\bm{\cal F}_{i-1}$ and arrive at:
	\begin{align}
		2\mu(\Ex J(\w_{i-1}) - J(w^\star))\hspace{-0.5mm}\leq&(1\hspace{-0.5mm}-\hspace{-0.5mm}\mu\eta\hspace{-0.5mm}+\hspace{-0.5mm}\mu^2(e^2+\beta^2))\Ex\|\widetilde{\w}_{i-1}\|^2\nn\\
		 &\;- \Ex\|\widetilde{\w}_{i}\|^2\hspace{-0.5mm}+\hspace{-0.5mm}\mu^2 (f^2+\sigma^2)
		\label{kjad567llk13lk1l3}
	\end{align}
	To proceed, we simplify the notation and introduce the scalars
	\bq
	a(i) &\define& \Ex J(\w_{i-1}) -J(w^\star)  \label{def.60}\\
	b(i) &\define&  \Ex\| \widetilde\w_{i} \|^2	\label{def.61}\\
	\alpha &\define& 1-\mu\eta+\mu^2(e^2+\beta^2)\label{exp.alpha}\\
	%&=& 1-\mu\eta +O(\mu^2)\nn\\
	\tau^2 &\define&  f^2+\sigma^2\label{def.63}
	\eq
	Note that since $w^{\star}$ is the unique global minimizer of $J(w)$, then it holds that $J(\w_{i-1})\geq J(w^{\star})$ so that $a(i)\geq 0$ for all $i$.  The variable $a(i)$ represents the average {\em excess risk}. Now, we can rewrite (\ref{kjad567llk13lk1l3}) more compactly as
	\be
	2\mu a(i) \leq \alpha b(i-1) - b(i) + \mu^2 \tau^2.
	\label{retudn,akjdj}\ee
	Iterating over {\color{black}$1 \leq i\leq L$, for some interval length $L$, gives:
	\be
	\sum_{i=1}^{L} \alpha^{L-i} (2\mu a(i) - \mu^2 \tau^2)   \leq \alpha^{L} b(0) - b(L) \;\leq\;\alpha^{L} b(0) \label{eq.mainIter}.
	\ee}Let us verify that $\alpha\in(0,1)$. First,  observe from expression (\ref{exp.alpha}) for $\alpha$ that $\alpha(\mu)$ is a quadratic function in $\mu$. This function attains its minimum at location $\mu^o=\eta/(2e^2+2\beta^2)$. For any $\mu$, the value of $\alpha(\mu)$ is larger than the minimum value of the function at $\mu^o$, i.e., it holds that
	\be
	\alpha \geq 1- \frac{\eta^2}{4(e^2+\beta^2)}.
	\label{kald.1l3kl1k3}\ee
	Now, comparing relations (\ref{eq.sgb_increase}) and (\ref{assump.subGradient}), we find that the sub-gradient vector satisfies:
	\be
	\eta\|w-w^{\star}\|\leq \|g(w)\|\leq c\|w-w^{\star}\|+d,\;\;\;\forall w,
	\ee
	which implies that $\eta\leq c$ since the above inequality must hold for all $w$. It then follows from (\ref{eq.eAndc}) that $e^2>\eta^2$ and from
	(\ref{kald.1l3kl1k3}) that
	\be
	\alpha \geq 1- \frac{\eta^2}{4\eta^2}>0.
	\ee
	In other words, the parameter $\alpha$ is positive. Furthermore, some straightforward algebra using (\ref{exp.alpha}) shows that condition (\ref{llkadkl.13lk1lk3l}) implies $\alpha<1$. We therefore established that $\alpha\in(0,1)$, as desired.
	
	Returning to (\ref{retudn,akjdj}), we note that because the (negative) sub-gradient direction is not necessarily a descent direction,  we cannot ensure that $a(i)<a(i-1)$. However, we can still arrive at a useful conclusion by introducing a pocket variable, denoted by $a^{\rm best}(L)\geq 0 $. This variable saves the value of the smallest increment, $a(j)$, up to time $L$, i.e.,
	\be
	a^{\rm best}(L)\define {\color{black}\min_{1\leq i\leq L}}\;a(i)\;.
	\ee
	Let $\w_{L-1}^{\rm best}$ denote the corresponding iterate $\w_i$ where this best value is achieved.
	Replacing $a(i)$ by $a^{\rm best}(L)$  in (\ref{eq.mainIter}) gives
	{\color{black}
	\bq
	(2\mu a^{\rm best}(L) - \mu^2 \tau^2)&\leq& \alpha^{L} b(0)\left(\sum_{i=1}^{L} \alpha^{L-i} \right)^{-1} \nn\\
	%&=& \alpha^{L} b(0)\left(\frac{1-\alpha^{L}}{1-\alpha}\right)^{-1} \nn\\
	&=&  b(0)\cdot \frac{\alpha^{L}(1-\alpha)}{1-\alpha^{L}}\label{eq.mainIter.2},
	\eq
	or, equivalently,
	\be
	2\mu a^{\rm best}(L)\;\leq\;\mu^2 \tau^2\;+\;b(0)\cdot \frac{\alpha^{L}(1-\alpha)}{1-\alpha^{L}}.\label{eq.81}
	\ee
	{\color{black}Using the definitions of $a^{\rm best}(L)$ and $b(0)$,  from (\ref{def.61})--(\ref{def.63}), we can rewrite (\ref{eq.81}) in the form:
		\begin{align}
			\Ex J(\w^{\rm best}_{L-1}) - J(w^\star) \leq&\;  \alpha^L \frac{(1-\alpha)}{2\mu(1-\alpha^{L})}\Ex\|\w_{0} - w^\star\|^2\nn\\
			&\;{}+ \mu(f^2+\sigma^2)/2
		\end{align}
	}}Taking the limit as $L\rightarrow \infty$, we conclude that
	\be
	\lim_{L\to\infty}\Ex J(\w_L^{\rm best}) -J(w^\star) \leq \mu \tau^2/2 = \mu(f^2+\sigma^2)/2.
	\ee
	\ep

	{\color{black}
	\noindent {\bf Remark \#1}: It is important to note that result  (\ref{result1})  extends and enhances a useful result derived by \cite{Agarwal2012Information} where the following {\em lower bound}  was established (using our notation):
	\be
	\Ex J(\w_i) - J(w^{\star}) \geq \frac{\zeta_1}{i} \left[\Ex \|\w_{0}-w^{\star}\|^2\right]\label{eq.73}
	\ee
	for some constant $\zeta_1>0$. This result shows that the convergence of $\Ex J(\w_i)$ towards $J(w^{\star})$ cannot be better than a sub-linear rate when one desires convergence towards the exact minimum value.  In contrast, our analysis that led to (\ref{result1}) establishes the following {\em upper bound}:
	\be
	\Ex J(\w_i) - J(w^{\star}) \leq \zeta \alpha^i \left[\Ex \|\w_{0}-w^{\star}\|^2\right]\;+\;O(\mu)\label{eq.74}
	\ee
	for some constant $\zeta >0$. Observe that this expression is showing that $\Ex J(\w_i)$ can actually approach a small $O(\mu)-$neighborhood around $J(w_i)$ {\em exponentially fast}  at a rate that is dictated by the scalar $0<\alpha<1$. It is clear that the two results (\ref{eq.73}) and (\ref{eq.74}) on the convergence rate  do not contradict each other. On the contrary, they provide complementary views on the convergence behavior (from below and from above). Still, it is useful to remark that the analysis employed by \cite{Agarwal2012Information})  imposes a stronger condition on the risk function than our condition: they require the risk function to be Lipschitz continuous. In comparison, we require the subgradient (and not the risk function) to be affine Lipschitz, which makes the current results applicable to a borader class of problems. 
	
	Figure \ref{fig.rate} illustrates one possible situation of sublinear and linear convergence rate bounds (\ref{eq.73}) and (\ref{eq.74}). The red curve shows
	that there exists an exponential rate $\alpha\in[0,1)$ that bounds the convergence behavior from above
	towards a steady-state regime. The blue line shows a sublinear curve bounding convergence from below.
	\qd\vspace{-2mm}
	\begin{figure}[h]
		\epsfxsize 9.cm \epsfclipon
		\begin{center}
			\leavevmode
			\includegraphics[scale=0.45]{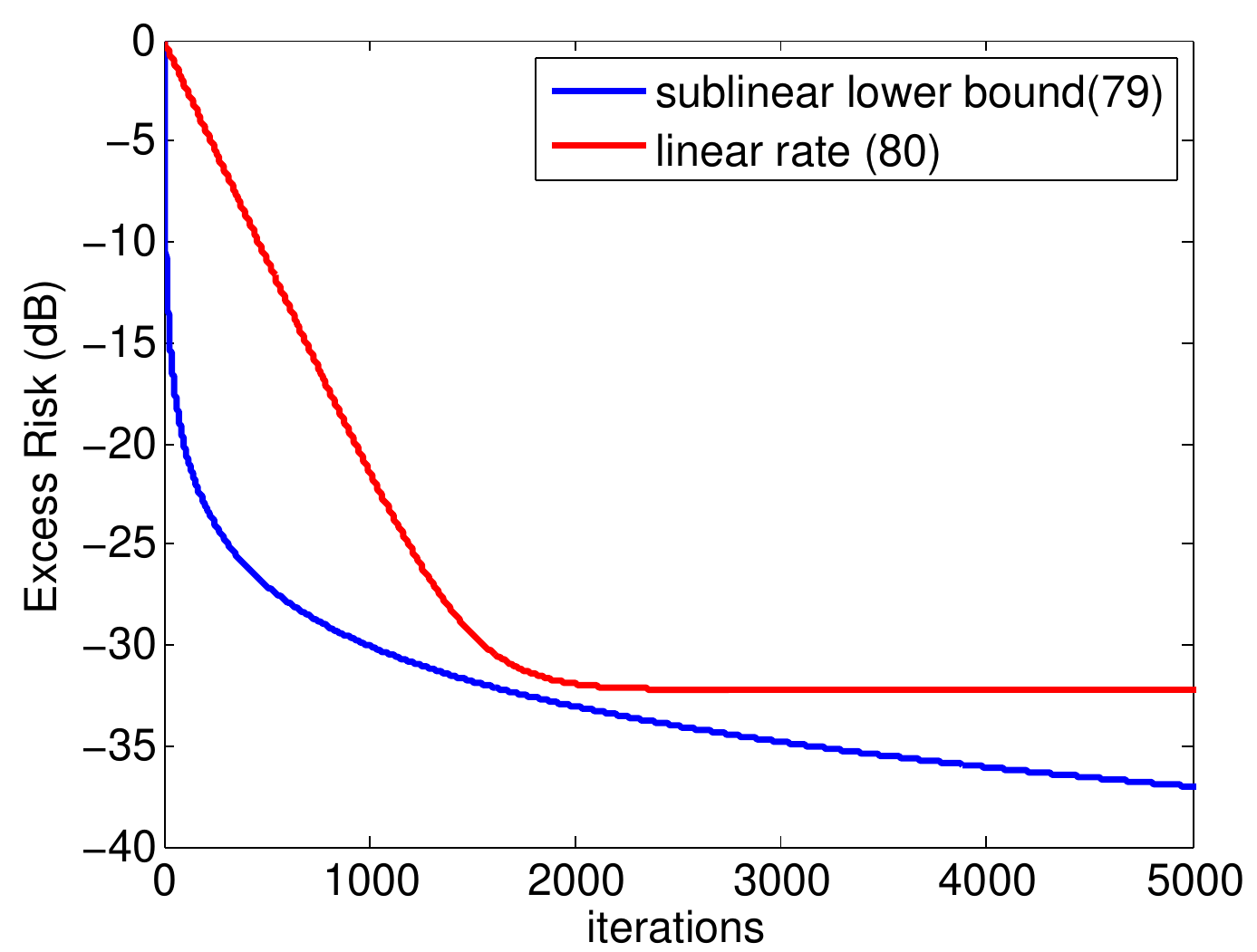} 
			\caption{{Illustration of sublinear and linear convergence bounds (\ref{eq.73}) and (\ref{eq.74}).}}\label{fig.rate}\vspace{-5mm}
		\end{center}
	\end{figure}
%	{
%	\color{blue} Response to question in 8.3:	This is exactly the point I want to make about the convergence rate and why I keep insisting that we cannot claim we are faster than the sub-linear rate to zero error. This is also the fundamental reason of the reviewer's attack on the definition of {\em converge}. And this is not this paper's problem. It is our analyzing method's problem that used to directly compare with the pure optimization conclusion.
%	
%	As for you question about how to show it converge exponentially fast, (1) depends on how you define the converge exponentially fast; (If using our previous definition like in your book, yes the proof is showing that.) (2) We indeed only can guarantee the performance curve is in the between. Unless we have more condition, we cannot say more.	
%	\\
%	}

	\noindent {\bf Remark \#2}: We can similarly comment on how our result (\ref{result1}) relates to the analysis in \cite{NedicThesis}. Using our notation, the main conclusion that follows from Propositions 3.2 and 3.3 in \cite{NedicThesis} is that 
	\be
	\Ex J(\w^{\rm best}_i) - J(w^\star)\leq \frac{\zeta_2}{i} \Ex\|{\color{black}\w_{0}} - w^\star\|^2 + O(\mu)\label{eq.nedic}
	\ee
	for some constant $\zeta_2>0$. This result only ensures a {\em sub-linear} rate of convergence. In contrast, our convergence analysis leads to (\ref{eq.74}), 
	% the following result
	% \be
	% \Ex J(w_i^{\rm best}) - J(w^{\star}) \leq \zeta \alpha^i \left[\Ex \|{\color{black}w_{0}}-w^{\star}\|^2\right]\;+\;O(\mu)
	% \ee
	%for some scalar $0<\alpha<1$,
	which shows that convergence actually occurs at a linear rate. This conclusion is clearly  more powerful. 
	Furthermore, as was the case with the treatment in \cite{Agarwal2012Information}, the result (\ref{eq.nedic}) is derived in \cite{NedicThesis} by assuming the sub-gradient vectors are bounded, which is a stronger condition than the affine Lipschitz condition used in the current manuscript. \hfill
	\qd
	}\vspace{-3mm}

	\subsection{Exponential Smoothing}
	Theorem \ref{them.121} only clarifies the performance of the best pocket value, which is not readily available during the algorithm implementation since the risk function itself cannot be evaluated.  That is, $J(\w_{i})$ cannot be computed because $J(w)$ is not known due to the lack of knowledge about the probability distribution of the data. However, a more practical conclusion can be deduced from the statement of the theorem as follows. Introduce the geometric sum:{\color{black}
	\be
	S_L\define \sum_{j=0}^L \alpha^{L-j}=\alpha S_{L-1}+1\;=\;\frac{1-\alpha^{L+1}}{1-\alpha},
	\ee}as well as the normalized and convex-combination coefficients:
	\be
	r_L(j) \define\frac{\alpha^{L-j} }{S_L},\;\;\;{\color{black}j=0,1,\ldots,L.} \label{fweofiji23e}
	\ee
	Using these coefficients, we define the weighted iterate
	\begin{align}
	\hspace{-2mm}\bar\w_L\hspace{-1mm}\define& \sum_{j=0}^L r_L(j)\w_{j}\nn\\
	=&\;\;\frac{1}{S_L}\left[\alpha^{L}\w_0+\alpha^{L-1}\w_1+\ldots+\alpha \w_{L-1}+\w_L\right] \label{eq.iter.weight}
	\end{align}
	Observe that, in contrast to $\w_{L}^{\rm best}$, the above weighted iterate is computable since its value depends on the successive iterates $\{\w_j\}$ and these are available during the operation of the algorithm. Observe further that $\bar{\w}_L$ satisfies the recursive construction:
	\be
	\bar{\w}_L\;=\;\left(1-\frac{1}{S_L}\right)\bar{\w}_{L-1}\;+\;\frac{1}{S_L}\w_L.
	\ee
	In particular, as $L\rightarrow\infty$, we have $S_L\rightarrow 1/(1-\alpha)$, and the above recursion simplifies in the limit to
	\be
	\bar{\w}_L\;=\;\alpha \bar{\w}_{L-1}\;+\;(1-\alpha)\w_L.
	\ee

	Now, since $J(\cdot)$ is a convex function, it holds that
	\be
	J(\bar\w_L)\;=\;{\color{black}J\left(\sum_{j=0}^L r_L(j)\w_j\right)\;\leq\;\sum_{j=0}^{L} r_L(j) J(\w_j).}
	\label{akdkkad.ll1k3}\ee
	Using this fact, the following corollary derives a result similar to (\ref{extendadlkad}) albeit applied to $\bar{\w}_L$.
	
	\begin{corollary}[{\sc Weighted iterate}] Under the same conditions as in Theorem~\ref{them.121}, it holds that
		\be
		\lim_{L\to\infty}\Ex J(\bar\w_L) -J(w^\star) \leq  \mu(f^2+\sigma^2)/2\label{lkadj13123},
		\ee
		and the convergence of $\Ex J(\bar\w_L)$ towards $J(w^{\star})$ continues to occur at the same exponential rate, $O(\alpha^L)$.
		\label{coolad.12}\end{corollary}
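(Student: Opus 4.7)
The plan is to lift the recursive estimate from the proof of Theorem~\ref{them.121} onto the convex combination $\bar\w_L$ by invoking Jensen's inequality and then re-using the same telescoping sum that appeared in (\ref{eq.mainIter}), but with a different set of geometric weights.

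First, I would observe that $\bar\w_L$ is a convex combination of $\{\w_j\}_{j=0}^L$ with positive coefficients $r_L(j)=\alpha^{L-j}/S_L$, so the convexity of $J$, as already noted in (\ref{akdkkad.ll1k3}), gives
\be
\Ex J(\bar\w_L) - J(w^\star) \;\leq\; \sum_{j=0}^L r_L(j)\bigl[\Ex J(\w_j) - J(w^\star)\bigr] \;=\; \frac{1}{S_L}\sum_{i=1}^{L+1}\alpha^{L+1-i}\,a(i),
\ee
where $a(i)=\Ex J(\w_{i-1})-J(w^\star)\geq 0$ is the excess-risk sequence from (\ref{def.60}) and the equality is the reindexing $i=j+1$.

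Second, I would revisit the one-step recursion $2\mu a(i)\leq \alpha b(i-1)-b(i)+\mu^2\tau^2$ from (\ref{retudn,akjdj}), multiply both sides by $\alpha^{L+1-i}$, and sum over $i=1,\ldots,L+1$. The $b$-terms telescope in exactly the same fashion as in the proof of Theorem~\ref{them.121}, collapsing to $\alpha^{L+1}b(0)-b(L+1)\leq \alpha^{L+1}b(0)$, while the constant contribution sums geometrically to $\mu^2\tau^2 S_L$ via the identity $S_L=(1-\alpha^{L+1})/(1-\alpha)$. Dividing through by $2\mu S_L$ and combining with the Jensen bound above yields
\be
\Ex J(\bar\w_L) - J(w^\star) \;\leq\; \frac{\alpha^{L+1}}{2\mu S_L}\,\Ex\|\w_0-w^\star\|^2 \;+\; \frac{\mu(f^2+\sigma^2)}{2}.
\ee

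Finally, since $\alpha\in(0,1)$ by Theorem~\ref{them.121}, we have $S_L\to 1/(1-\alpha)$ and $\alpha^{L+1}\to 0$ geometrically as $L\to\infty$, so the first term decays at the rate $O(\alpha^L)$ while the second is exactly the desired $O(\mu)$ steady-state bias, simultaneously establishing (\ref{lkadj13123}) and the claimed exponential convergence rate. I do not anticipate any real obstacle here: the corollary is essentially a bookkeeping exercise that replaces the non-computable iterate $\w_L^{\rm best}$ by the computable exponentially-weighted iterate $\bar\w_L$ using the same telescoping machinery. The only mildly delicate point is the index shift between the definition of $\bar\w_L$ (which sums over $j=0,\ldots,L$) and that of $a(\cdot)$ (which is indexed from $i=1$), together with the observation that the weights $r_L(j)=\alpha^{L-j}/S_L$ are exactly the ones that align the Jensen bound with the telescoping sum already used for the pocket iterate.
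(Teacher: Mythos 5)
Your proposal is correct and follows essentially the same route as the paper: Jensen's inequality applied to the exponentially weighted convex combination, combined with the geometric telescoping of the one-step recursion (\ref{retudn,akjdj}) and division by the normalizing sum. The only difference is a harmless index shift (you bound $\Ex J(\bar\w_L)$ by summing to $L+1$, whereas the paper divides (\ref{eq.mainIter}) by $S_{L-1}$ to bound $\Ex J(\bar\w_{L-1})$), which does not affect the limit or the rate.
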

	\begin{proof}
	We start from (\ref{eq.mainIter}), namely,
	\be
	\hspace{-1mm}	\sum_{i=1}^{L} \alpha^{L-i} (2\mu \Ex J(\w_{i-1})-2\mu J(w^{\star}) - \mu^2 \tau^2) \leq \alpha^{L} b(0) \label{eq.mainIterfff}
	\ee
	and divide both sides by the same sum:
	\begin{align}
		&\sum_{i=1}^{L} \left(\frac{\alpha^{L-i}}{S_{L-1}}\right) (2\mu \Ex J(\w_{i-1})-2\mu J(w^{\star}) - \mu^2 \tau^2) \nn\\
		 &\;\hspace{14mm}\leq
	\left(\frac{\alpha^{L}}{S_{L-1}}\right) b(0) \label{eq.asxacmainIterfff},
	\end{align}
	which gives
	\begin{align}
	&\hspace{-5mm}\sum_{i=1}^{L} r_{L-1}(i-1) (2\mu \Ex J(\w_{i-1})-2\mu J(w^{\star}))\nn\\[-2.2mm]
	&\hspace{2.5cm}\,\leq
	\frac{\alpha^{L}(1-\alpha)}{1-\alpha^{L}} b(0)\;+\;\mu^2\tau^2 \label{eq.asxacmainIterffasxcasf}.
	\end{align}
	Appealing to the convexity property (\ref{akdkkad.ll1k3}) we conclude that
	\be
	2\mu(\Ex J(\bar{\w}_{L-1})-  J(w^{\star}))
	\leq {\color{black}\frac{\alpha^{L}(1-\alpha)}{1-\alpha^{L}} b(0)\;}+\;\mu^2\tau^2 \label{eq.asxacmainIterffasxcasxasxasf}.
	\ee
	Taking the limit as $L\rightarrow\infty$ leads to (\ref{lkadj13123}).
	\end{proof}\vspace{-5mm}
	
	\subsection{Listing of Algorithm}\vspace{-1mm}
	It is interesting to compare result (\ref{lkadj13123}) with what happens in the case of {\em differentiable} risk functions. In that case, the standard stochastic gradient algorithm, using the actual gradient vector rather than sub-gradients, can be employed to seek the minimizer, $w^{\star}$. It was established in
	\cite[Ch. 4]{sayed2014adaptation} that for risk functions that are twice-differentiable, the stochastic gradient algorithm guarantees
	\be
	\lim_{L\to\infty}\Ex J(\w_L) - J(w^\star) = \frac{\mu}{4}\sigma^2,
	\label{llkad.lk1l3kl1k3}\ee
	where the right-hand side is dependent on $\sigma^2$ alone; this factor arises from the bound (\ref{usadkh.13lk1l3k}) on the gradient noise process. In contrast,  in the non-smooth case (\ref{lkadj13123}), we established here a similar bound that is still in the order of $O(\mu)$. However, the size of the bound is not solely dependent on $\sigma^2$ anymore but it also includes the factor $f^2$; this factor arises from condition (\ref{assump.sbg2}) on the sub-gradient vectors. That is, there is some minor degradation (since $\mu$ is small) that arises from the non-smoothness of the risk function. If we set $f=0$ in (\ref{lkadj13123}), we recover (\ref{llkad.lk1l3kl1k3}) up to a scaling factor of $2$. Although the bound in this case is still $O(\mu)$, as desired, the reason why{} it is not as tight as the bound derived in the smooth case in \cite{sayed2014adaptation} is because the derivation in the current paper is not requiring the risk function to be twice differentiable, as was the case in \cite{sayed2014adaptation}, and we are also discarding the term $b(L)$ in equation (\ref{eq.mainIter}). The important conclusion to note is that the right-hand side of (\ref{lkadj13123}) is also $O(\mu)$, as in the smooth case (\ref{llkad.lk1l3kl1k3}).
	
	Using $\alpha$ as a scaling weight in (\ref{eq.iter.weight}) may still be inconvenient because its value needs to be determined. The analysis however suggests that we may replace $\alpha$ by any parameter $\kappa$ satisfying $\alpha \leq \kappa < 1$. The parameter $\kappa$ plays a role similar to the step-size, $\mu$: both become parameters selected by the designer. Next, we introduce the new weighted variable (we continue to denote it by $\bar{w}_L$ as in (\ref{eq.iter.weight}) to avoid a proliferation of symbols; we also continue to denote the scaling coefficients in (\ref{awfwe23}) below by $r_{L}(j)$ similar to (\ref{fweofiji23e})):
	\be
		{\color{black}	\bar\w_{L}\define \sum_{j=0}^L r_L(j)\w_{j} \label{Eq.83},}\vspace{-3mm}
	\ee
	where now \be r_L(j) = \kappa^{L-j}/S_L, \;\; {\color{black}j = 0,1,\ldots,L}\label{awfwe23}\ee
	and $\color{black} S_L=\sum_{j=0}^L\kappa^{L-j}$.
	
	\begin{corollary}[{\sc Relaxed Weighted iterate}]\label{corollary.2} Under the same conditions as in Theorem~\ref{them.121} and $\alpha\leq\kappa<1$, relation (\ref{lkadj13123}) continues to hold with $\bar{\w}_{L}$ in (\ref{eq.iter.weight}) replaced by (\ref{Eq.83}):
	\begin{align}
		\lim_{L\to\infty}\Ex J(\bar\w_L) -J(w^\star) \leq  \mu(f^2+\sigma^2)/2 \label{eq.ff239ij3}
	\end{align}
	And convergence now occurs at the exponential rate $O(\kappa^L)$.
	\end{corollary}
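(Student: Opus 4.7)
The plan is to adapt the proof of Corollary~\ref{coolad.12} with one small but essential modification to accommodate the enlarged weighting parameter $\kappa \geq \alpha$. First I would return to the per-step inequality that was derived inside the proof of Theorem~\ref{them.121},
\begin{equation}
2\mu a(i) \;\leq\; \alpha b(i-1) - b(i) + \mu^2 \tau^2,
\end{equation}
where the scalars $a(i)$, $b(i)$, $\alpha$, and $\tau^2$ are defined in (\ref{def.60})--(\ref{def.63}). I would then multiply both sides by $\kappa^{L-i}$ (rather than $\alpha^{L-i}$) and sum the resulting relation over $i=1,\ldots,L$.

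The key new step is to exploit the assumption $\alpha \leq \kappa$ together with the nonnegativity $b(i-1)=\Ex\|\widetilde{\w}_{i-1}\|^2 \geq 0$ to write
\begin{equation}
\alpha\, \kappa^{L-i}\, b(i-1) \;\leq\; \kappa^{L-i+1}\, b(i-1).
\end{equation}
Substituting this bound term-by-term restores a telescoping structure on the right-hand side,
\begin{equation}
\sum_{i=1}^{L} \kappa^{L-i}\bigl[\alpha b(i-1) - b(i)\bigr] \;\leq\; \kappa^{L} b(0) - b(L) \;\leq\; \kappa^{L} b(0),
\end{equation}
which plays exactly the role that $\alpha^{L}b(0)$ did in the derivation leading to (\ref{eq.mainIter}).

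From here the remaining steps are a direct port of the proof of Corollary~\ref{coolad.12}. I would divide through by $S_{L-1}=\sum_{i=1}^{L}\kappa^{L-i}$, invoke the convexity bound (\ref{akdkkad.ll1k3}) to recognize the resulting weighted average of $\Ex J(\w_{i-1})$ as an upper bound for $\Ex J(\bar{\w}_{L-1})$, and then pass to the limit $L\to\infty$. Since $\kappa\in(0,1)$, we have $\kappa^{L}\to 0$ and $S_{L-1}\to 1/(1-\kappa)$, so the transient term decays at the exponential rate $O(\kappa^{L})$ and (\ref{eq.ff239ij3}) follows, with a steady-state bound of the same form $\mu(f^2+\sigma^2)/2$ as in Corollary~\ref{coolad.12}.

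The only real subtlety, and the step I would verify most carefully, is precisely this relaxation: with $\kappa$ strictly greater than $\alpha$, the telescoping identity used in Corollary~\ref{coolad.12} degrades into an inequality, and one must be sure that the nonnegative gap $(\kappa-\alpha)\sum_{i=1}^{L}\kappa^{L-i}b(i-1)$ that is discarded enters on the favorable side so as to preserve the upper bound. Because this gap is nonnegative and is being dropped from the larger side, the bound survives intact, and the only price paid is the replacement of $\alpha$ by $\kappa$ in the transient decay rate—exactly the trade-off advertised by the corollary.
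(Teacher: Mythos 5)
Your proposal is correct and follows essentially the same route as the paper: the paper likewise starts from the per-step bound $2\mu a(i)\leq \alpha b(i-1)-b(i)+\mu^2\tau^2$, weights by $\kappa^{L-i}$, and uses $\alpha\leq\kappa$ together with $b(i)\geq 0$ to absorb the nonpositive gap $\sum_{i}\kappa^{L-i-1}(\alpha-\kappa)b(i)$ and recover the telescoped bound $\kappa^{L}b(0)$. The remaining normalization, convexity, and limiting steps are identical to Corollary~\ref{coolad.12}, exactly as you describe.
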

		\bp
		The argument requires some modification relative to what we have done before. We start from (\ref{retudn,akjdj}) again:
		\be
		2\mu a(i) \leq \alpha b(i-1) - b(i) + \mu^2\tau^2.
		\ee
		
		\noindent But unlike the previous derivation in (\ref{eq.mainIter}), now we use $\kappa$ to expand the recursion from {\color{black} iteration $i=1$ to $L$:
		\begin{align}&\hspace{-4mm}\sum_{i=1}^L \kappa^{L-i}(2\mu a(i) -\mu^2\tau^2)\nn\\
		\leq&\; \sum_{i=1}^L \kappa^{L-i}\Big(\alpha b(i-1) - b(i)\Big) \nn\\
		%&=&\sum_{i=0}^L \kappa^{L-i}\alpha_q b(i-1) - \sum_{i=0}^L \kappa^{L-i}b(i)\nn\\
		%=&\;\sum_{i=0}^{L-1} \kappa^{L-i-1}\alpha b(i) - \sum_{i=1}^{L} \kappa^{L-i}b(i)\nn\\
		=&\;\sum_{i=0}^{L-1} \kappa^{L-i-1}(\alpha-\kappa) b(i) + \kappa^{L} b(0) - b(L) \nn\\
		\leq&\;\kappa^{L} b(0),
		\end{align}}where in the last inequality we used the fact that $\kappa\geq\alpha$. We can now proceed from here and complete the argument as before.
		\ep
	
	\smallskip
	{\color{black}
	\begin{corollary}[{\sc Mean-Square-Deivation Performance}]\label{corollary.3} Under the same conditions as in Theorem~\ref{them.121} and $\alpha\leq\kappa<1$. It holds for $\bar{\w}_L$ in (\ref{Eq.83}) that
	\be
		\lim_{L\to\infty} \Ex\| \bar{\w}_L - w^\star \|^2 \leq \mu (f^2+\sigma^2)/\eta\label{eq.91}
	\ee
	Moreover, convergence to the steady-state regime occurs at the exponential rate $O(\kappa^L)$.
	\end{corollary}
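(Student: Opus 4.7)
The plan is to derive the mean-square-deviation bound from the excess-risk bound in Corollary~\ref{corollary.2} using the strong-convexity assumption as the bridge, since strong convexity gives a quadratic lower bound on $J(w)-J(w^\star)$ in terms of $\|w-w^\star\|^2$. This is the natural route because Corollary~\ref{corollary.2} already supplies both the $O(\mu)$ steady-state offset and the $O(\kappa^L)$ transient decay for $\Ex J(\bar{\w}_L)-J(w^\star)$, and the same two features should survive the transfer to the MSD quantity up to a multiplicative constant involving $\eta$.

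First I would recall that since $w^\star$ is the unique minimizer of the convex risk $J(w)$, one has $0\in\partial J(w^\star)$, so there exists a valid sub-gradient choice $g'(\cdot)\in\partial J$ with $g'(w^\star)=0$. Substituting $w_1=w$, $w_2=w^\star$ and $g(w_2)=g'(w^\star)=0$ in the strong-convexity relation (\ref{assump.strongCVX}) yields the pointwise quadratic lower bound
\begin{equation}
J(w)-J(w^\star)\;\geq\;\frac{\eta}{2}\|w-w^\star\|^2,\quad\forall w.
\end{equation}
Applying this with $w=\bar{\w}_L$ and taking expectations gives
\begin{equation}
\Ex\|\bar{\w}_L-w^\star\|^2\;\leq\;\frac{2}{\eta}\,\bigl(\Ex J(\bar{\w}_L)-J(w^\star)\bigr).
\end{equation}

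Next I would combine this with Corollary~\ref{corollary.2}. Taking the limit as $L\to\infty$ in the displayed inequality and invoking (\ref{eq.ff239ij3}) immediately produces
\begin{equation}
\lim_{L\to\infty}\Ex\|\bar{\w}_L-w^\star\|^2\;\leq\;\frac{2}{\eta}\cdot\frac{\mu(f^2+\sigma^2)}{2}\;=\;\frac{\mu(f^2+\sigma^2)}{\eta},
\end{equation}
which is the claimed bound (\ref{eq.91}). For the exponential rate statement, I would not take the limit but rather carry the non-asymptotic form of Corollary~\ref{corollary.2} (as in (\ref{eq.asxacmainIterffasxcasxasxasf}), but with $\alpha$ replaced by $\kappa$ along the lines of the proof of Corollary~\ref{corollary.2}), obtaining
\begin{equation}
\Ex J(\bar{\w}_{L-1})-J(w^\star)\;\leq\;\frac{\kappa^L(1-\kappa)}{2\mu(1-\kappa^L)}\,\Ex\|\w_0-w^\star\|^2\;+\;\frac{\mu(f^2+\sigma^2)}{2},
\end{equation}
and then dividing by $\eta/2$ to transfer the same $O(\kappa^L)$ transient into the MSD bound.

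The proof involves no new ideas beyond Corollary~\ref{corollary.2} and strong convexity, so the main (minor) subtlety is to justify cleanly the existence of a sub-gradient with $g'(w^\star)=0$; this is already exploited earlier in the proof of Theorem~\ref{them.121} when bounding $\|g(\w_{i-1})\|^2$, so the same justification carries over here without additional work.
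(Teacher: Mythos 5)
Your proposal is correct and follows essentially the same route as the paper: the paper's proof simply invokes the strong-convexity bound $J(w)\geq J(w^\star)+\frac{\eta}{2}\|w-w^\star\|^2$ (its relation (\ref{relation.1})) to get $\Ex\|\bar{\w}_L-w^\star\|^2\leq\frac{2}{\eta}\bigl(\Ex J(\bar{\w}_L)-J(w^\star)\bigr)$ and then combines this with Corollary~\ref{corollary.2}. Your additional care in justifying the zero sub-gradient at $w^\star$ and in carrying the non-asymptotic $O(\kappa^L)$ form is a welcome elaboration of details the paper leaves implicit, but it is not a different argument.
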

	\bp
		Referring to equation (\ref{relation.1}), we get
		\be
			\Ex\|\bar{\w}_i-w^\star\|^2 \leq \frac{2}{\eta} \left(\Ex J(\bar{\w}_i) - J(w^\star) \right)
		\ee
		Combining with the corollary \ref{corollary.2}, we arrive at (\ref{eq.91}). 
	\ep
	}

	For ease of reference, we summarize in the table below the listing of the stochastic subgradient learning algorithm with exponential smoothing for which results (\ref{eq.ff239ij3}) and (\ref{llkad.lk1l3kl1k3}) hold.
	
	\noindent\rule{0.49\textwidth}{1pt}\vspace{-2mm}
	\begin{center} \bf \small Stochastic subgradient learning with exponential smoothing\end{center} \vspace{-4mm}
	\rule{0.49\textwidth}{1pt}
	\mbox{\bf \small Initialization}:
	\(
	S_0=1,\; \bar{w}_0=w_0 = 0,\; \kappa=1-O(\mu).  \nn
	\)\\
	{\bf \small repeat for $i\geq 1$}:
	\begin{align}
	\w_i=&\; \w_{i-1} -\mu \widehat{g}(\w_{i-1})\\
	%\rm Optional:\hspace{10mm}&\nn\\
	S_i=&\;\kappa S_{i-1} + 1\\
	\bar{\w}_i=&\;\left(1-\frac{1}{S_i}\right)\bar{\w}_{i-1}\;+\;\frac{1}{S_i} \w_i\hspace{5mm}
	\end{align}
	{\bf \small end}\\[-2mm]
	\rule{0.49\textwidth}{1pt}\vspace{-3mm}

	\subsection{Interpretation of Results}
	\noindent The results derived in this section highlight several important facts that we would like to summarize:
	\begin{enumerate}
		\item[(1)] First, it has been observed in the optimization literature that sub-gradient descent iterations can perform poorly in {\em deterministic} problems (where $J(w)$ is known). Their convergence rate is $O(1/\sqrt{i})$ under convexity and $O(1/i)$ under strong-convexity \cite{nesterov2004introductory} when decaying step-sizes, $\mu(i)=1/i$, are used to ensure convergence \cite{shalev2011pegasos}. Our arguments show that the situation is different in the context of {\em stochastic} optimization when true sub-gradients are approximated from streaming data. By using {\em constant} step-sizes to enable continuous learning and adaptation, the sub-gradient iteration is now able to achieve exponential convergence at the rate of $O(\alpha^i)$ for some $\alpha=1-O(\mu)$.
	
		\item[(2)] Second, of course, this substantial improvement in convergence rate comes at a cost, but one that is acceptable and controllable. Specifically, we cannot guarantee convergence of the algorithm to the global minimum value, $J(w^{\star})$, anymore but can instead approach this optimal value with high accuracy in the order of $O(\mu)$, where the size of $\mu$ is under the designer's control and can be selected as small as desired.
		
		\item[(3)] Third, this performance level is sufficient in most cases of interest because, in practice, one rarely has an infinite amount of data and, moreover, the data is often subject to distortions not captured by any assumed models. It is increasingly recognized in the literature that it is not always necessary to ensure exact convergence towards the  optimal solution, $w^{\star}$, or the minimum value, $J(w^{\star})$, because these optimal values may not reflect accurately the true state due to modeling error. For example, it is explained in the works \cite{bousquet2008tradeoffs,bottou2012stochastic,bottou2010large} that it is generally unnecessary to reduce the error measures below the statistical error level that is present in the data.
	\end{enumerate}
	
	\section{Applications: Single Agent Case}
	We now apply the results of the previous analysis to several cases in order to illustrate that {\it stochastic} sub-gradient constructions can indeed lead to good performance. \\

	\noindent {\bf Example 5 (LASSO problem).} We run a simulation with $\mu=0.001$, $\delta = 0.002$, and $M=100$. Only two entries in $w^{\circ}$ are assumed to be nonzero. The regression vectors and noise process
	$\{\h_i,\n(i)\}$ are both generated according to  zero-mean normal distributions with variances $R_h=I$ and $\sigma_n^2=0.01,$ respectively.
	Thus under this case, we have
	\be
	e^{2} = 2c^2 = 2, \;\;\;f^{2} = 2d^2 = 8\delta^2M
	\ee
	It then follows that
	\begin{align}
	&\hspace{-5mm}\lim_{L\to\infty}\,\Ex\,J^{\rm lasso}(\bar{\w}_L)-J^{\rm lasso}(w^{\star})\nn\\
	&\;\;\;\;\;\leq  4\mu\delta^2M+\frac{\mu}{2}\sigma_n^2 \Tr(R_h)+\mu a\|w^o-w^\star\|^2.
	\end{align}
%	where $a=2\Ex\|R_h-\h\h\tran\|^2 = 2(1-\Ex\h\tran\h)^2 = 2(M-1)^2=19602$. {(\color{blue}To see why it is correct is noticing that matrix $xx\tran$ has only one non-zero eigenvalue $x\tran x$).} The convergence rate is 
%	{\color{red} This is the problem:
%	\bq
%	\alpha &=& 1-\mu\eta+\mu^2(e^2+\beta^2)\\
%	&=&1-0.001+0.001^2(2+2*19602)=1.039.....\nn
%	\eq
%	But I am doing exactly the same previous work. How is our other paper found $\alpha$? 
%	}

	In order to verify this result, 
	From the optimality condition, $0\in\partial J(w^\star)$, it is easy to conclude that\cite{donoho1994ideal}
	\be
	w^\star = \mathcal{S}_\delta(w^{\circ}),
	\ee
	where the symbol $\mathcal{S}_\delta$ represents the soft-thresholding function with parameter $\delta$, i.e.,
	\be
	\mathcal{S}_\delta(x) ={\rm sgn}(x)\cdot \max\{0,|x| - \delta\}.
	\ee
	Figure~\ref{fig.SLMS} plots the evolution of the excess-risk curve,
	$\Ex\,J^{\rm lasso}(\bar{\w}_L)-J^{\rm lasso}(w^{\star})$, obtained by averaging over $50$ experiments. The figure compares the performance of the standard LMS solution:
	\be
	\w_{i} = \w_{i-1} + \mu \h_i(\bgamma(i)-\h_i\tran\w_{i-1}),
	\ee
	against the sparse sub-gradient version\cite{duttweiler2000proportionate,kopsinis2011online,chen2009sparse}:
	\be
	\w_{i} = \w_{i-1} + \mu \h_i(\bgamma(i)-\h_i\tran\w_{i-1})-\mu\delta\cdot{\rm sgn}(\w_{i-1}).
	\ee
	and the subgradient algorithm with exponential smoothing from this article (listed below).\\[-2mm]
		\rule{0.49\textwidth}{1pt}
	{\bf \small Stochastic sparse LMS  with exponential smoothing}\vspace*{-3mm}\\
	\rule{0.49\textwidth}{1pt}
	{\bf \small Initialization}:
	\(
	S_0=1,\; \bar{w}_0=w_0 = 0  \nn
	\)\\
	{\bf\small Repeat}:
	\begin{align}
	\w_{i} =&  \w_{i-1} + \mu \h_i(\bgamma(i)-\h_i\tran\w_{i-1})-\mu\delta\cdot{\rm sgn}(\w_{i-1}) \label{slms.impl1}\\
	S_i =&\;\kappa S_{i-1} +1\;\; \label{slms.impl2}\\
	\bar\w_{i} =& \;\left(1-\frac{1}{S_i}\right)\bar{\w}_{i-1}\;+\;\frac{1}{S_i}\w_i\label{slms.impl3}
	\end{align}
	{\bf\small End} \vspace{-1.5mm}\\
	\rule{0.49\textwidth}{1pt}
	In the simulation, we set the parameter $\kappa=0.999$.
	It is observed that the stochastic sub-gradient implementation satisfies the bound predicted by theory. Observe that while the smoothed implementation is able to attain better MSD performance, as predicted by theory, the convergence during the initial stages of learning is slowed down. 
%	{\color{blue} Professor:\\
%		1. The purpose of two paralleled green curves is to show that they are converged in the same rate.\\
%		2. The difference you saw is because the coefficient $O(\alpha)$ is different.\\
%		3. Of course, I knew that smoothing will slower the convergence and improved the MSD. The reason is also simple. At beginning the smoothing making the $\bar{w}_i$ including more weight on the initial point $w_0$, which will slow down the convergence and is also the reason why the pink curve will have extra oval shape. In the middle phase, as theorem predicted, it will have the exponential convergence rate. At steady states, it is like doing averaging. Averaging Gaussian white noise will have smaller variance. As for analyzing part, we can also easily predict that behavior. Because we used the Jensen's inequality (87) and there is almost surely that those point  won't coincide at same position, we can conclude that smoothing part definitely strictly smaller than non-smoothing curve.\\
%		4. Kappa is close to 1 is because $\kappa\geq\alpha\approx1-O(\mu)$.
%		\\
%		5. a is around 5000 now. However, $\alpha$ is still around 1.001. That is why previous choice a heuristic value instead o theoretical value. I still need to think more about that.
%	}

	\begin{figure}[h]
		\epsfxsize 7.8cm \epsfclipon
		\begin{center}
			\leavevmode \epsffile{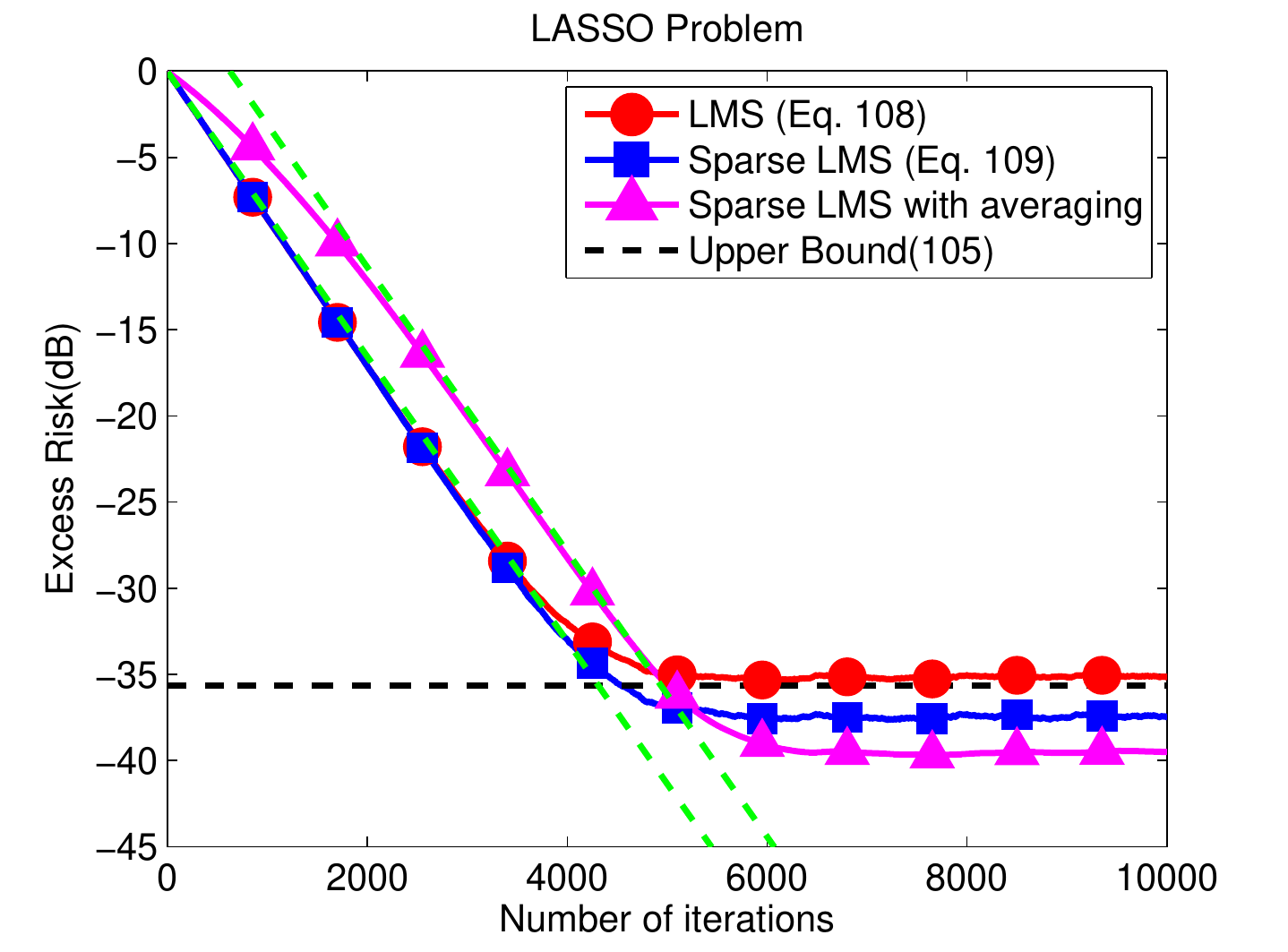} \caption{{ LASSO problem. The excess-risk curves, i.e., \big($\Ex\,J({\w}_L)-J(w^{\star})$\big), for LMS, sparse-LMS, and sparse-LMS with weighting are obtained by averaging over $50$ experiments. The two green dot lines are the referring convergence rate.\vspace{-0.7cm}}}\label{fig.SLMS}
		\end{center}
	\end{figure}
	
	$\hfill \Box$
	
	\noindent {\bf Example 6 (SVM problem).} For the SVM problem, we can conclude from (\ref{lkadj13123}) that
	\begin{align}
	\lim_{L\to\infty}\,\Ex\,J^{\rm svm}(\bar{\w}_L)-J^{\rm svm}(w^{\star})&\,\leq\,\frac{\mu}{2}(2d^2+\sigma^2)\leq \frac{9\mu}{2}\Tr(R_h)\nn
	\end{align}
	Actually, for the SVM construction, we can obtain another upper bound than the one provided by Corollary~\ref{coolad.12}; this is because we can exploit the special structure of the
	SVM cost to arrive at
	\bq
	\lim_{L\to\infty}\,\Ex\,J^{\rm svm}(\bar{\w}_L)-J^{\rm svm}(w^{\star})\hspace{18mm}\nn\\
	\hspace{11mm}\leq	\mu  \big(\rho^2 \|w^\star\|^2+\rho + {\rm Tr}(R_h)/2 \big)\label{tighter.svm},
	\eq	
	with convergence rate $\alpha = 1- 2\mu\rho+2\mu^2\rho^2$. 
	The proof is provided in Appendix \ref{apd.SVMtight}. We list below the stochastic subgradient SVM implementation with exponential smoothhing using the above  value for $\kappa$. \\[-2mm]
	\rule{0.49\textwidth}{1pt}
	{\bf \small Stochastic subgradient SVM  with exponential smoothing}\vspace*{-3mm}\\
	\rule{0.49\textwidth}{1pt}
	{\bf \small Initialization}:
	\(
		S_0=1,\; \bar{w}_0=w_0 = 0  \nn
	\)\\
	{\bf\small Repeat}:
	\begin{align}
		\w_{i} =& \;(1-\mu\rho)\w_{i-1} - \mu\bgamma(i)\h_i\mathbb{I}[\bgamma(i)\h_i\tran\w_{i-1}\leq 1] \label{svm.impl1}\\
		S_i =&\;(1- 2\mu\rho+2\mu^2\rho^2) S_{i-1} +1\;\; \label{svm.impl2}\\
		\bar\w_{i} =& \;\left(1-\frac{1}{S_i}\right)\bar{\w}_{i-1}\;+\;\frac{1}{S_i}\w_i\label{svm.impl3}
	\end{align}
	{\bf\small End} \vspace{-1.5mm}\\[-1mm]
	\rule{0.49\textwidth}{1pt}
	\noindent We compare the performance of the stochastic sub-gradient SVM implementation listed above (with all variables initialized to zero) against LIBSVM (a popular SVM solver that uses dual quadratic programming) \cite{CC01a}. The test data is obtained from the LIBSVM website\footnote{\url{http://www.csie.ntu.edu.tw/~cjlin/libsvmtools/datasets/binary.html}} and also from the UCI dataset\footnote{\url{http://archive.ics.uci.edu/ml/}}. We first use the Adult dataset after preprocessing \cite{platt1999fast} with 11,220 training data and 21,341 testing data in 123 feature dimensions. The purpose of the learning task is to use the personal information such as age, education, occupation, and race to predict whether a person makes over $50$K a year.
	To ensure a fair comparison, we use linear LIBSVM with the exact same parameters as the sub-gradient method. Hence, we choose $C=5\times 10^2$ for LIBSVM, which corresponds to $\rho = \frac{1}{C}=2\times 10^{-3}$. We also set $\mu=0.05$.
	\begin{figure}[h]
		\epsfxsize 7.7cm \epsfclipon
		\begin{center}
			\leavevmode \epsffile{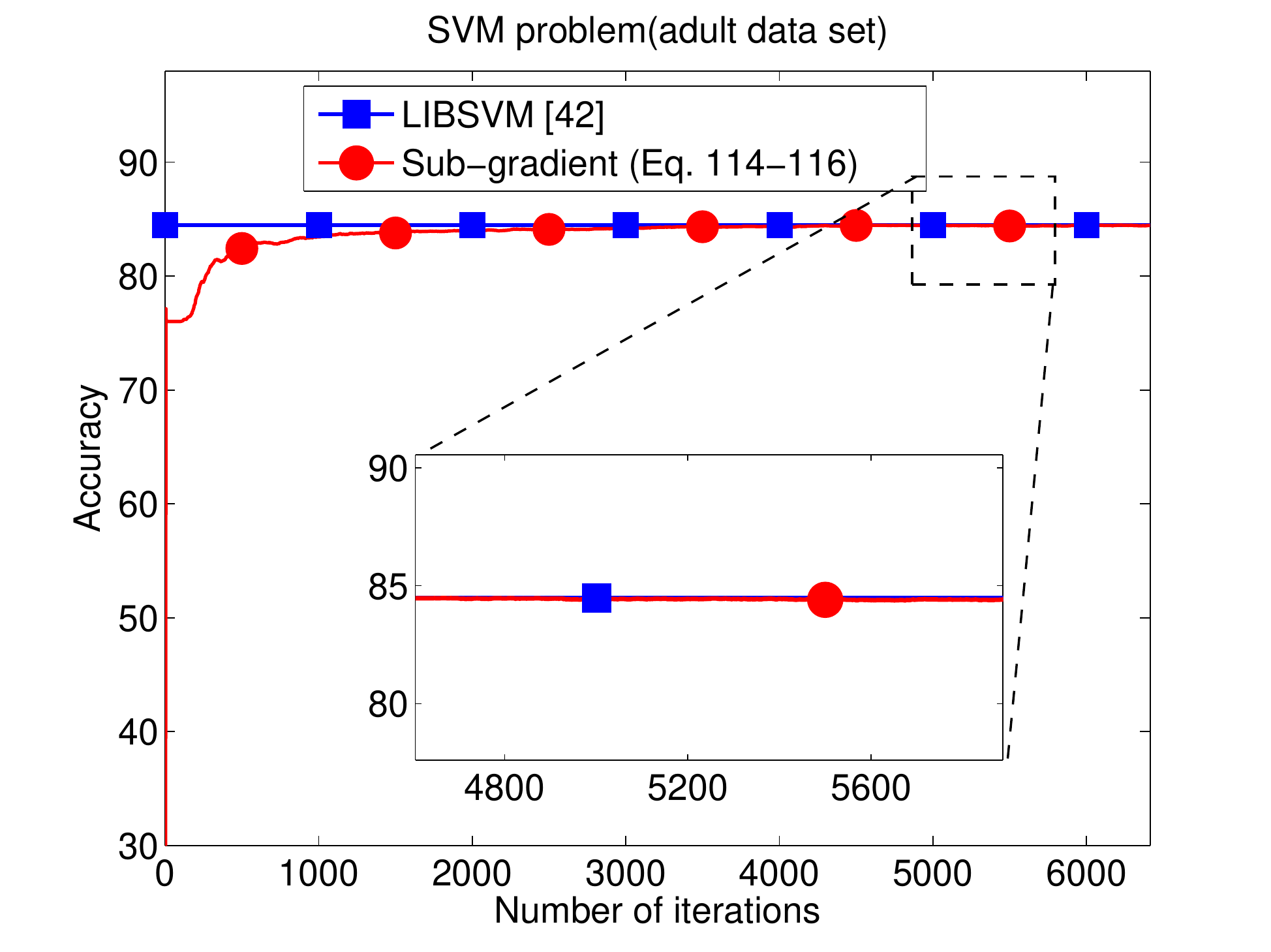} \caption{{ SVM solvers applied to the Adult data set. Comparison of the performance accuracy, percentage of correct prediction over test dataset, for LIBSVM\cite{CC01a} and a stochastic sub-gradient implementation (\ref{svm.impl1})-\ref{svm.impl3}).\vspace{-5mm}}}\label{fig.SVM_adult}
		\end{center}
	\end{figure}
	
	We can see from Fig.~\ref{fig.SVM_adult} that the stochastic sub-gradient algorithm is able converge to the performance of LIBSVM quickly. Since we only use each data point once, and since each iteration is computationally simpler, the sub-gradient implementation ends up being computationally more efficient.  We also examine the performance of the sub-gradient SVM solver on another large-scale dataset, namely, the Reuters Corpus Volume I (RCV1) data with 20242 training data and 253843 testing data consisting of 47236 feature dimensions. The RCV1 dataset uses cosine-normalized, log TF-IDF ({ term frequency-inverse document frequency}) vectors representations for newswire stories to predict their categories. The chosen parameters are $C=1\times 10^5, \mu =0.2$. The performance is shown in Fig.~\ref{fig.SVM_RCV1}. $\hfill \Box$\vspace{-3mm}
	\begin{figure}[h]
		\epsfxsize 7.8cm \epsfclipon
		\begin{center}
			\leavevmode \epsffile{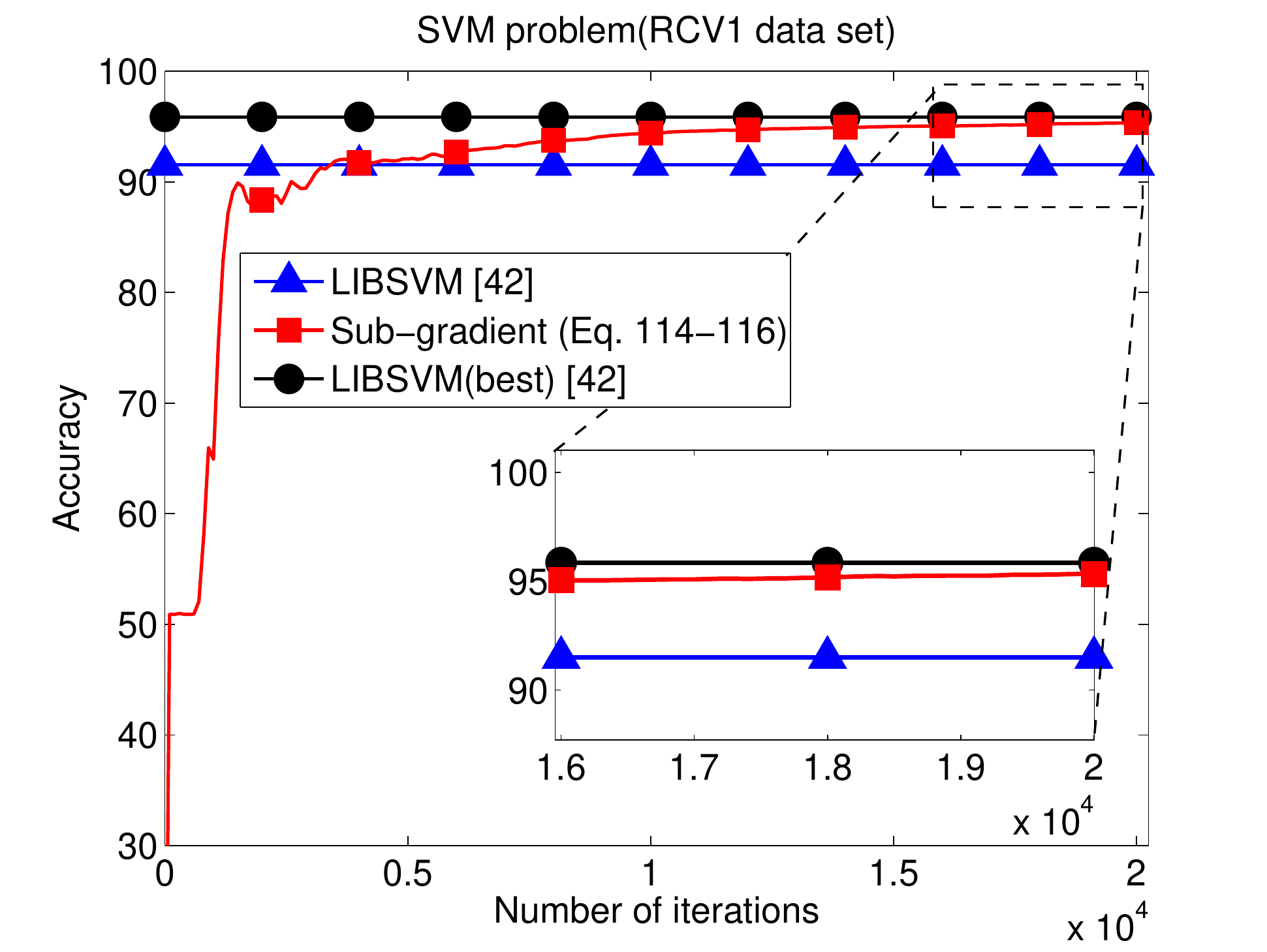} \caption{{ SVM solvers applied to the RCV1 data set. Comparison of the performance accuracy, percentage of correct prediction over test dataset, for LIBSVM\cite{CC01a} and the stochastic sub-gradient implementation (\ref{svm.impl1})-(\ref{svm.impl3}). The blue line for  LIBSVM is generated by using the same parameters as the sub-gradient implementation, while the black line is determined by using cross validation. The difference between both lines is because LIBSVM achieves higher accuracy when setting $\rho$ to a large value, say, around the value of one. In comparison, from (\ref{tighter.svm}) we know that sub-gradient methods need a small $\rho$ to achieve higher accuracy. \vspace{-5mm}}}\label{fig.SVM_RCV1}
		\end{center}
	\end{figure}

	\noindent {\bf Example 7 (Image denoising problem).} We next illustrate how the stochastic sub-gradient implementation can match  the performance of some sophisticated techniques for image denoising, such as the FISTA algorithm. This latter technique solves the denoising problem by relying on the use of accelerated proximal methods applied to a dual problem \cite{beck2009fast,beck2009fast2}.
	
	Specifically, one classical formulation for the image denoising problem with total-variation regularization  involves seeking an image (or matrix) that minimizes the following deterministic cost\cite{rudin1992nonlinear}:
	\be
	\min_\mathcal{I}\; \frac{1}{2}\|\mathcal{I}- \mathcal{I}_{\rm noisy}\|^2_F + \lambda\cdot {\rm TV}(\mathcal{I}), \label{eq.TVproblem}
	\ee
	where $\lambda>0$ is a regularization factor. Moreover, the term $\mathcal{I}$ denotes some rectangular or square image that we wish to recover, say, of size $N\times N$, and $\mathcal{I}_{\rm noisy}$ refers to the available noisy measurement of the true image:
	\be
	\mathcal{I}_{\rm noisy}\;=\;\mathcal{I}^o\;+\;\mbox{\rm noise}, \label{image.noise}
	\ee
	where the noise term refers to a zero-mean perturbation.  The notation $\|\cdot\|_F$ denotes the Frobenious norm of its matrix argument, and the operation $\rm TV(\cdot)$ stands for a total-variation computation, which is defined as follows:\footnote{Here, we only consider the discrete $\ell_1$-based anisotropic TV and neglect the boundary modification.}
	\be
	{\rm TV}(\mathcal{I})\hspace{-0.5mm} \define\hspace{-0.5mm}  \sum_{m,n} |\mathcal{I}(m,n)-\mathcal{I}(m+1,n)| + |\mathcal{I}(m,n)-\mathcal{I}(m,n+1)|.\nn
	\ee
	The total variation term essentially encourages the difference between the image and some of its shifted versions to remain nearly sparse. We may also formulate a stochastic version of the denoising problem by considering instead:
	\be
	\min_{{\mathcal{I}}}\; \frac{1}{2}\Ex\|{{\mathcal{I}}}-{\bm{\mathcal{I}}}_{\rm noisy}\|^2_F + \lambda\cdot {\rm TV}({{\mathcal{I}}}) \label{eq.TVproblem.2},
	\ee
	where the expectation is now over the randomness in the noise used to generate the noisy image (here we only consider the synthesis case).
	The sub-gradient of the Total Variation term is straightforward to compute. For illustration purposes, we evaluate the sub-gradient at some arbitrary point $(m_0,n_0)$. Expanding the summation and separating the terms related to point $(m_0,n_0)$, we obtain:
	\begin{align}
	\hspace{-0mm}\rm TV(\mathcal{I})\hspace{-0.5mm} = & |\mathcal{I}(m_0,n_0)\hspace{-1mm}-\hspace{-1mm}\mathcal{I}(m_0\hspace{-1mm}+\hspace{-1mm}1,n_0)|\hspace{-1mm}+\hspace{-1mm} 	|\mathcal{I}(m_0,n_0)\hspace{-1mm}-\hspace{-1mm}\mathcal{I}(m_0,n_0\hspace{-1mm}+\hspace{-1mm}1)|\nn\\
	&\;+|\mathcal{I}(m_0\hspace{-1mm}-\hspace{-1mm}1,n_0)\hspace{-1mm}-\hspace{-1mm}\mathcal{I}(m_0,n_0)|\hspace{-1mm}\nn\\
	&\;{}+|\mathcal{I}(m_0,n_0\hspace{-1mm}-\hspace{-1mm}1)\hspace{-1mm}-\hspace{-1mm}\mathcal{I}(m_0,n_0)|+ \rm rest,\label{tv.diff}
	\end{align}
	where the $rest$ variable refers to terms that do not contain the variable $\mathcal{I}(m_0,n_0)$. Computing the sub-gradient with respect to $\mathcal{I}(m_0,n_0)$ will generate four terms with the sign function as in
	the LASSO problem. It is then clear that the stochastic sub-gradient implementation in this case is
	given by:\vspace{-2mm}
	\be
	{\bm{\mathcal{I}}}_{i} ={\bm{\mathcal{I}}}_{i-1} - \mu\Big({\bm{\mathcal{I}}}_{i-1}-{\bm{\mathcal{I}}}_{\rm noisy} + \sum_{j=1}^4 \lambda\cdot {\rm sgn}({\bm{\mathcal{I}}}_{i-1}-{\bm{\mathcal{I}}}_{i-1}^j) \Big)
	\label{one.laldk123}\ee
	where  ${\bm{\mathcal{I}}}_i$ represents the recovered image at iteration $i$,  ${\bm{\mathcal{I}}}_i^{1}$ represents shifting the image to the left by one pixel, while ${\bm{\mathcal{I}}}_i^{2}, {\bm{\mathcal{I}}}_i^{3}, {\bm{\mathcal{I}}}_i^{4}$ represent shifting the image  to the right, up, and down by one pixel, respectively. We observe that recursion (\ref{one.laldk123}) now iterates repeatedly over the {\em same} single image, ${\bm{\mathcal{I}}}_{\rm noisy}$. Accordingly, in this example, the stochastic gradient noise does not vary over time, i.e.,\vspace{-3mm}
	
	\begin{table*}[t]
		\centering
		\caption{\rm Comparison between the stochastic sub-gradient method (\ref{one.laldk123}) and FISTA\cite{beck2009fast} over the KODIM test image set (c.f. footnote 4). All test images are subject to additive zero-mean Gaussian noise with standard variance 0.1 (with respect to image values in the range $[0,1]$). We set $\lambda =0.08, \mu=0.002$ and 300 max iterations for sub-gradient methods. For different values of $\lambda$ and $\mu$, the results will be different, but the algorithms will perform similarly when $\mu$ is chosen properly.  The results in the table show that the sub-gradient implementation can, in general, achieve similar or higher PSNR in shorter time.\vspace{-3mm}}
		{\small
		\begin{tabular}{c|c||cccccccccc}
			\hline
			& Test Image \cellcolor[gray]{0.8}& kodim1\cellcolor[gray]{0.8} & kodim5 \cellcolor[gray]{0.8}& kodim7\cellcolor[gray]{0.8} & kodim8\cellcolor[gray]{0.8}& kodim11\cellcolor[gray]{0.8}& kodim14 \cellcolor[gray]{0.8}& kodim15 \cellcolor[gray]{0.8}& kodim17 \cellcolor[gray]{0.8}& kodim19
			\cellcolor[gray]{0.8}& kodim21 \cellcolor[gray]{0.8}\\
			\hhline{============}
			\hspace{-0.15cm}\multirow{2}{*}{PSNR(dB)}\hspace{-0.1cm}
			& Sub-gradient
			& \bf 25.19 &\bf 25.18 &\bf 29.43 &\bf 24.59 &\bf 27.80 &\bf 30.32 &\bf 30.32&\bf 29.38 &\bf 27.53 &\bf 27.29  \\
			\cline{2-12}
			
			& FISTA%\citep{beck2009fast}
			& 24.90 & 24.87 & 29.14 & 24.26 & 27.59 & 30.25  & 30.25 & 29.17 & 27.23 & 27.03\\
			\hlinewd{1.2pt}
			
			\hspace{-0.15cm}\multirow{2}{*}{Time(s)}\hspace{-0.1cm}
			& Sub-gradient
			&\bf 8.88 &\bf 9.33 &\bf 8.50 & 8.46 &\bf 9.00 &\bf 8.7  &\bf 9.07 &\bf 9.50 & 9.60 &\bf 9.45\\
			\cline{2-12}
			
			& FISTA%\citep{beck2009fast}
			& 9.16 & 9.78 & 10.15 &\bf 8.22 & 10.24 & 9.19 & 10.13 & 9.98 &\bf 9.47 & 9.62 \\
			\hline
			
%			\hline
%			& Test Image \cellcolor[gray]{0.8}& kodim15\cellcolor[gray]{0.8} & kodim17 \cellcolor[gray]{0.8} & kodim19\cellcolor[gray]{0.8}& kodim21\cellcolor[gray]{0.8}& kodim23 \cellcolor[gray]{0.8}& kodim24 \cellcolor[gray]{0.8}\\
%			\hhline{========}
%			\hspace{-0.15cm}\multirow{2}{*}{PSNR(dB)}\hspace{-0.1cm}
%			& Sub-gradient
%			& \bf 30.32 &\bf 29.38  &\bf 27.53 &\bf 27.29 &\bf 31.68   &\bf 26.25\\
%			\cline{2-8}
%			
%			& FISTA%\citep{beck2009fast}
%			& 30.25 & 29.17 & 27.23 & 27.03 & 31.51   &25.95\\
%			\hlinewd{1.2pt}
%			
%			\hspace{-0.15cm}\multirow{2}{*}{Time(s)}\hspace{-0.1cm}
%			& Sub-gradient
%			&\bf 9.07 &\bf 9.50  & 9.60 &\bf 9.45 &\bf 9.17   &8.88 \\
%			\cline{2-8}
%			
%			& FISTA%\citep{beck2009fast}
%			& 10.13 & 9.98  &\bf 9.47 & 9.62 & 10.17 & \bf 8.51\\
%			\hline

		\end{tabular}
		\label{table.11}
	}

	\end{table*}
\be
	s_i({\bm{\mathcal{I}}}_i)\;=\; \mathcal{I}^{o} - {\bm{\mathcal{I}}}_{\rm noisy}, \quad \forall i.\label{gfewf94q}
\ee
%	{To see this, note first that one particular sub-gradient at ${\cal I}_i$ is given by
%		\begin{align}
%			g(\mathcal{I}_i) =&\, \Ex[\mathcal{I}_i - \boldsymbol{\mathcal{I}}_{\rm noisy}] + \lambda\sum_{j=1}^4 {\rm sgn}({{\mathcal{I}}}_{i}-{{\mathcal{I}}}_{i}^j) \nn\\
%			\stackrel{(\ref{image.noise})}{=}&\,  \mathcal{I}_i - \mathcal{I}^o + \lambda\sum_{j=1}^4 {\rm sgn}({{\mathcal{I}}}_{i}-{{\mathcal{I}}}_{i}^j) \label{wfeifg2}
%		\end{align}
%		while a noisy gradient can be selected as:
%		\begin{align}
%		\widehat{g}(\mathcal{I}_i) =&\, \mathcal{I}_i - \boldsymbol{\mathcal{I}}_{\rm noisy} + \lambda\sum_{j=1}^4 {\rm sgn}({{\mathcal{I}}}_{i}-{{\mathcal{I}}}_{i}^j)\label{ngiw983}
%		\end{align}
%		Subtracting (\ref{wfeifg2}) and (\ref{ngiw983}) leads to (\ref{gfewf94q}).  
%	}
	It is obvious from (\ref{image.noise}) that the gradient noise has zero mean and bounded variance. Therefore, Assumption \ref{asm.noise} still holds. Table~\ref{table.11} lists performance results using the Kodak image suite\footnote{\url{http://r0k.us/graphics/kodak/}}. The table lists two metrics. The first metric is the PSNR:\vspace{-1mm}
	\be
	{\rm PSNR}(\mathcal{I}) \define 10\times\log{\frac{(255)^2}{\mbox{\rm MSE}(\mathcal{I})}},
	\ee
	where $\mbox{\rm MSE}$ represents the mean-square-error,
	\be
		{\rm MSE}(\mathcal{I}) \define  \frac{1}{MN}\sum_{m,n} |\mathcal{I}(m,n) - \mathcal{I}^o(m,n)|^2 
	\ee
	where $M$ and $N$ are the length and width of image and the second metric is the execution time. For a  fair comparison, we used similar un-optimized MATLAB codes\footnote{Code for FISTA is available at \url{http://iew3.technion.ac.il/~becka/papers/tv\_fista.zip}.} under the same computer environment. The table shows that the sub-gradient implementation can achieve comparable or higher PSNR values in shorter time. Clearly, if we vary the algorithm parameters, these values will change. However, in general, it was observed in these experiments that  the sub-gradient implementation succeeds in matching the performance of FISTA reasonably well.
	$\hfill \Box$
	\vspace{-5mm}

	\begin{figure*}[h]
		\epsfxsize 10cm \epsfclipon
		\begin{center}
		\leavevmode
			\includegraphics[scale=0.61]{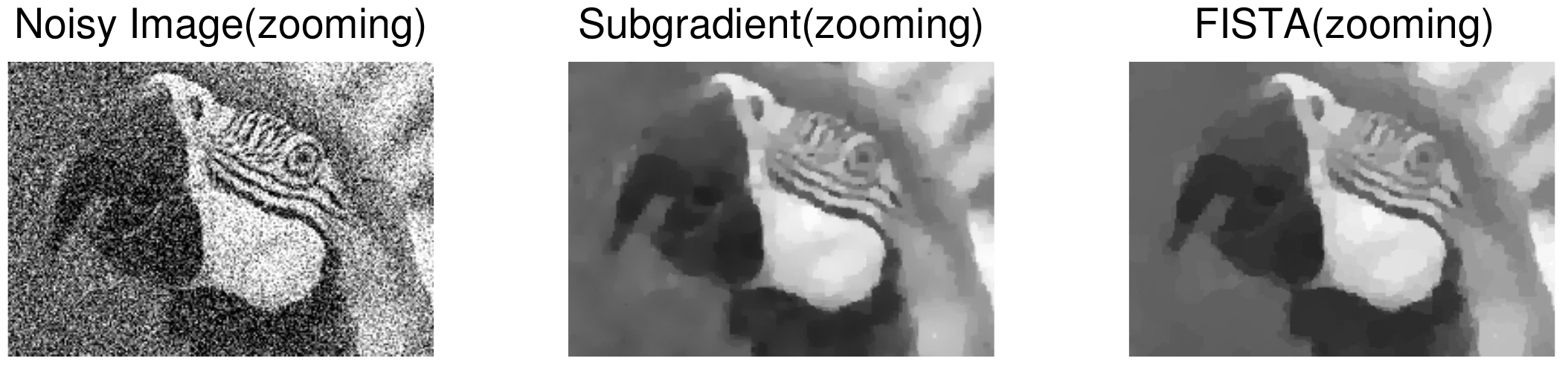} \vspace{-3mm}
			\caption{{ Comparison of the performance of FISTA and sub-gradient implementations on  test image  Kodim 23 (zoom-in) under low PSNR (13dB). The result of the sub-gradient implementation is almost indistinguishable from the result of FISTA. }}\label{fig.image denoise}\vspace{-4mm}
		\end{center}
	\end{figure*}
	
	\appendices
	\section{\sc Derivation of the Another SVM Bound (\ref{tighter.svm})} \vspace{-1mm}\label{apd.SVMtight}
	We assumed in the exposition leading to Theorem~\ref{them.121} and Corollary~\ref{coolad.12} that the sub-gradient vectors and the variance of the gradient noise satisfy affine-like forms separately --- see
	(\ref{usadkh.13lk1l3k}) and (\ref{assump.subGradient}). For the case of the SVM problem, a joint bound can be derived as follows. First, note that
	\begin{align}
	&\hspace{-3mm}\Ex \left[\,\left\|g^{\rm svm}(\w_{i-1})+\s_{i}(\w_{i-1})\right\|^2  \,| \,{\bm {\cal F}}_{i-1}\,\right]\nn\\
	%=& \Ex \left[\,\left\|\widehat{g}^{\rm svm}(\w_{i-1})\right\|^2  \,| \,{\bm {\cal F}}_{i-1}\,\right]\nn\\
	=&\Ex\left[\,\left\|\rho \w_{i-1}+\bgamma(i) \h_i\, \mathds{I} [\bgamma(i)\h_i\tran \w_{i-1} \leq 1]  \right\|^2  \,| \,{\bm {\cal F}}_{i-1}\,\right]\nn\\
	=& \rho^2 \|\w_{i-1}\|^2 + 2\rho \Ex \bgamma(i) \h_i\tran \w_{i-1}\,\mathds{I} [\bgamma(i)\h_i\tran \w_{i-1} \leq 1]   \,| \,{\bm {\cal F}}_{i-1}] + \nn\\
	& \quad \Ex \left[\bgamma^2(i) \h_i\h_i\tran\, \mathds{I} [\bgamma(i)\h_i\tran \w_{i-1} \leq 1] \,| \,{\bm {\cal F}}_{i-1}\right] \nn\\[-2mm]
	\stackrel{(a)}{\leq}& \rho^2 \|\w_{i-1}\|^2 + 2\rho +\Ex [\h_i\tran\h_i\, \mathds{I} [\bgamma(i)\h_i\tran \w_{i-1} \leq 1] \,| \,{\bm {\cal F}}_{i-1}] \nn\\
	\;\leq\;& \rho^2 \|\w_{i-1}\|^2 + 2\rho+\Tr(R_h) \nn\\
	\;\stackrel{(b)}{\leq}\;& 2\rho^2 \|\widetilde\w_{i-1}\|^2 +2\rho^2\|w^\star\|^2+ 2\rho+\Tr(R_h)  \label{eq.SVMtight},
	\end{align}
	where in step (a) we used the facts that $\bgamma^2(i)=1$ and
	\be
	\bgamma(i) \h_i\tran \w_{i-1}\mathds{I} [\bgamma(i)\h_i\tran \w_{i-1} \leq 1]   \,| \,{\bm {\cal F}}_{i-1}]\leq 1,
	\ee
	while step (b) follows from Jensen's inequality by adding and subtracting $w^{\star}$ to $\w_{i-1}$. We therefore conclude that
	\begin{align}
	&\hspace{-2mm}\Ex [\|g^{\rm svm}(\w_{i-1})+\s_{i}(\w_{i-1})\|^2  \,| \,{\bm {\cal F}}_{i-1}]\nn\\
	&\;\hspace{4mm}\leq 2\rho^2 \|\widetilde\w_{i-1}\|^2 +2\rho^2\|w^\star\|^2+ 2\rho+\Tr(R_h).
	\end{align}
	We now use this result to expand the first line of (\ref{subckakjdk13lk13}):
	\begin{align}
	&\hspace{-4mm}\Ex[\|\widetilde{\w}_{i}\|^2\,|\,\bm{\cal F}_{i-1}\,] 
	\nn\\
		=&\; \Ex[\,\|\widetilde{\w}_{i-1}+\mu g^{\rm svm}(\w_{i-1}) + \mu \s_{i}(\w_{i-1})\|^2 \,|\,\bm{\cal F}_{i-1}\,]\nn\\
		=&\;\|\widetilde{\w}_{i-1}\|^2+2\mu g^{\rm svm}(\w_{i-1})\tran \widetilde{\w}_{i-1}\nn\\
		&\; {}+  \mu^2\Ex[\|g^{\rm svm}(\w_{i-1})+\s_{i}(\w_{i-1})\|^2\,|\,\bm{\cal F}_{i-1}\,]\nn\\
		\leq&\; \|\widetilde{\w}_{i-1}\|^2+2\mu g^{\rm svm}(\w_{i-1})\tran \widetilde{\w}_{i-1}\nn\\
		&\;\;{}+ \mu^2\big(2\rho^2 \|\widetilde\w_{i-1}\|^2 +2\rho^2\|w^\star\|^2+ 2\rho+\Tr(R_h)\big).\label{eq.133}
	\end{align}
	Indeed, using strong convexity, we have
		\begin{align}
			\hspace{-2mm}g^{\rm svm}(\w_{i-1})\tran \widetilde{\w}_{i-1}\leq J(w^\star) - J(\w_{i-1}) - \frac{\rho}{2} \|\widetilde{\w}_{i-1}\|^2
		\end{align}
		Substituting into (\ref{eq.133}), we get:
		\begin{align}
		&\hspace{-3mm}2\mu (J(\w_{i-1}) - J(w^\star) ) \leq (1 -\mu\rho + 2\mu^2\rho^2)\|\widetilde{\w}_{i-1}\|^2\\[-.6mm]
		&-\Ex[\|\widetilde{\w}_{i}\|^2\,|\,\bm{\cal F}_{i-1}\,] +2\mu^2\rho^2\|w^\star\|^2+ 2\mu^2\rho+\mu^2\Tr(R_h)\nn
		\end{align}
%		\begin{align}
%			&\hspace{-3mm}2\mu (J(\w_{i-1}) - J(w^\star) ) \leq \underbrace{(1 -\mu\rho + 2\mu^2\rho^2)}_{\define\alpha}\|\widetilde{\w}_{i-1}\|\\[-4.6mm]
%			&-\Ex[\|\widetilde{\w}_{i}\|^2\,|\,\bm{\cal F}_{i-1}\,] + \overbrace{2\mu^2\rho^2\|w^\star\|^2+ 2\mu^2\rho+\mu^2\Tr(R_h)}^{\define\mu^2\tau^2}\nn
%		\end{align}
	Now following the same steps after (\ref{kjad567llk13lk1l3}), we arrive at the tighter bound  (\ref{tighter.svm}).\vspace{-3mm}
	
\bibliographystyle{IEEEbib}
\bibliography{ref_sbg}
\end{document}